\documentclass{article}

\usepackage[preprint]{neurips_2026}

\usepackage{iftex}
\ifPDFTeX
  \usepackage[utf8]{inputenc}
  \usepackage[T1]{fontenc}
\fi

\usepackage{hyperref}
\usepackage{url}
\usepackage{booktabs}
\usepackage{amsfonts}
\usepackage{amsmath,amssymb,amsthm}
\usepackage{mathtools}
\usepackage{nicefrac}
\usepackage{xcolor}
\usepackage{graphicx}
\usepackage{wrapfig}
\usepackage{xspace}
\usepackage{comment}

\usepackage{algorithm}
\usepackage{algorithmic}

\IfFileExists{DL_notations.tex}{
  \input{DL_notations} 
}{
  \usepackage{bm}

\def\vtheta{{\bm{\theta}}}

\renewcommand{\vec}[1]{\boldsymbol{#1}}

\renewcommand{\vtheta}{\vec{\theta}}

}

\newcommand{\LLQR}{LLQR\xspace}
\newcommand{\SAM}{SAM\xspace}
\newcommand{\LLQRSAM}{LLQR+SAM\xspace}
\definecolor{LLQRSAMNGD}{HTML}{1F77B4}
\definecolor{LLQRSAMNewton}{HTML}{B35C00}

\theoremstyle{plain}
\newtheorem{theorem}{Theorem}[section]
\newtheorem{proposition}[theorem]{Proposition}
\newtheorem{corollary}[theorem]{Corollary}

\theoremstyle{remark}

\newtheorem{assumption}[theorem]{Assumption}

\title{Navigating Potholes with Geometry-Aware Sharpness Minimization}

\author{%
  Simon Dufort-Labbé\\
  Mila, Université de Montréal\\
  \And
  Mehrab Hamidi\\
  Mila, Université de Montréal
  \And
  Razvan Pascanu\\
  Mila, Université de Montréal
  \And
  Ioannis Mitliagkas\\
  Mila, Université de Montréal
  \And
  Damien Scieur\\
  Samsung -- SAIL Montreal\\
  Mila, Université de Montréal\\
  \And
  Aristide Baratin\\
  Samsung -- SAIL Montreal\\
  Mila, Université de Montréal\\
}

\begin{document}

\maketitle

\begin{figure*}[b!]
  \centering
  \includegraphics[width=0.98\textwidth]{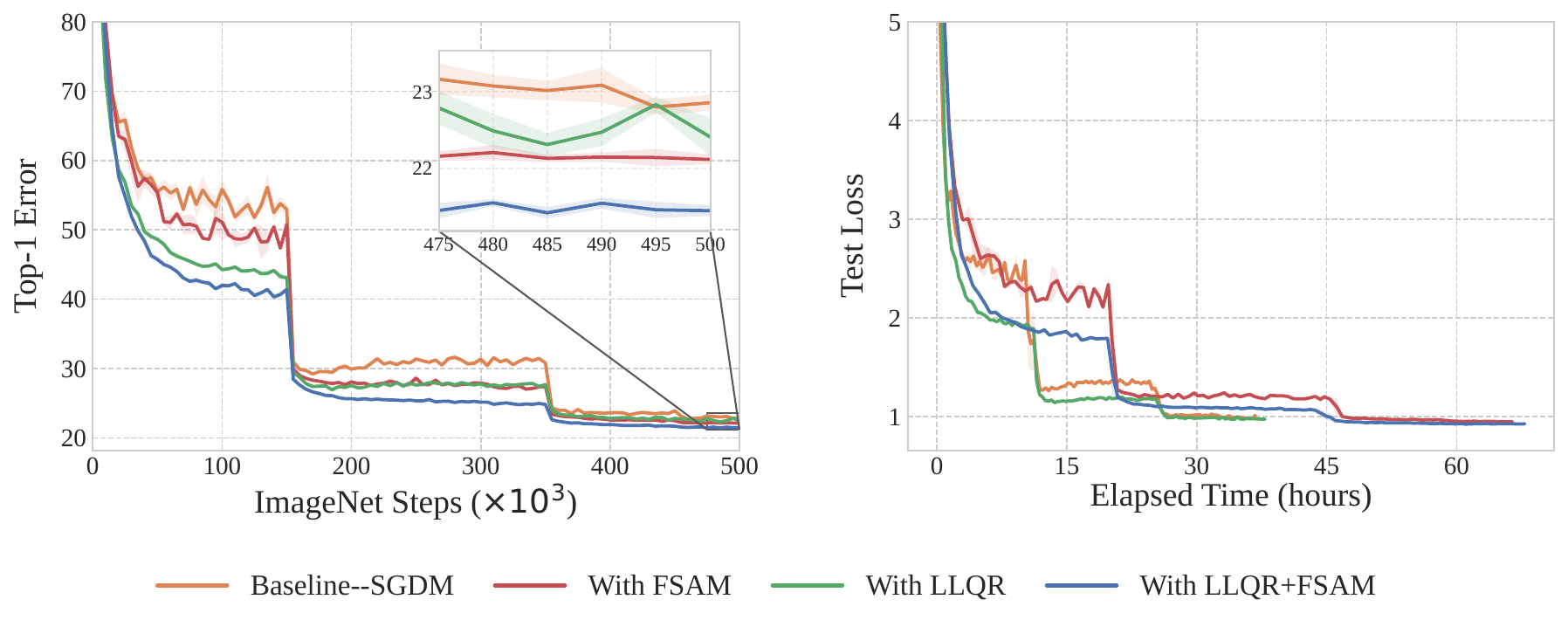}
  \caption{\textbf{Interaction between FSAM and \LLQR\ on ResNet-50/ImageNet.}
  \textbf{Left}: Top-1 error for SGDM and FSAM, a \SAM variant, with and without \LLQR. Although both \SAM-style methods and \LLQR\ can be interpreted as curvature-correcting mechanisms, their combination yields gains over either component alone, suggesting complementary effects. \textbf{Right}: Test loss versus elapsed training time. Despite the usual concern that second-order methods are prohibitively expensive, \LLQR\ remains efficient enough not to become the bottleneck: whenever the slowdown of \SAM\ is acceptable, topping it with \LLQR\ likely preserves affordability.}
  \label{fig:imagenet_fsam_top1_loss_time}
\end{figure*}
\vspace{-1.5em}
\begin{abstract}
Sharpness-aware minimization (SAM) encourages flat minima by perturbing parameters along directions of high loss curvature, but treats all parameter directions uniformly, ignoring the underlying loss geometry. We introduce LLQR+SAM, which combines SAM with a learned preconditioner obtained from the recently proposed LLQR framework--a second-order method that recasts steepest descent as a layerwise linear-quadratic regulator problem. The preconditioner is updated sparsely and maintained as a slow exponential moving average, so it captures a smoothed, low-resolution picture of the loss landscape geometry. The SAM perturbation then operates on top of this learned geometry, probing curvature at a faster timescale. We show that this two-timescale structure is not merely a computational convenience: theoretically, the preconditioner amplifies the SAM escape signal in directions that are flat under the average geometry but locally sharp (“potholes”).  Wide, flat basins, by contrast, remain stable. Empirically, LLQR+SAM gives consistent gains over both SAM and LLQR alone across standard vision and sequence modeling benchmarks, supporting the view that slow learned geometry and fast sharpness correction are genuinely complementary.
\end{abstract}


\section{Introduction}

Modern deep-learning loss landscapes are widely thought to combine smooth,
slowly-varying structure at large scales---wide basins, valleys, and ridges
shaped by the architecture and data---with much sharper features at smaller
scales: narrow valleys, spurious minima, and localized \emph{potholes} that
act as attractors of poor generalization. Two largely independent lines of
optimization research target these two scales. \emph{Geometry-aware} methods
such as natural gradient~\citep{amari1998natural}, Newton, K-FAC~\citep{martens2015kfac}
and Shampoo~\citep{gupta2018shampoo} use estimates of curvature to reshape the
descent direction, addressing the ill-conditioning of the average loss
geometry. \emph{Sharpness-aware} methods, beginning with
SAM~\citep{foret2021sam}, perturb the parameters before evaluating the
gradient, biasing optimization toward flatter minima.

These two mechanisms operate on different aspects of the geometry, and at
very different time scales. A learned preconditioner accumulated over many
gradient steps can only capture the slow average curvature of the
landscape; it is by construction insensitive to sharp localized features.
A SAM perturbation, evaluated on a single minibatch, probes local sharpness
directly---but treats all parameter directions uniformly, ignoring the
underlying optimization geometry. Each captures something the other misses.

In this work, we study the combination of the two. We use
\textbf{LLQR}~\citep{Anonymous2026LLQR}, a recently proposed framework that
learns a structured inverse preconditioner $U$ by minimizing a
linear-quadratic-regulator (LQR) objective derived from the layerwise
network dynamics and a chosen divergence on the loss. The preconditioner
is updated sparsely (a few times per epoch) and maintained as an
exponential moving average; it captures a smoothed, low-resolution picture
of the average loss geometry. On top of this slow geometry we apply a SAM
perturbation evaluated on each minibatch \emph{in the $U$-induced norm},
and the resulting outer update uses the same learned geometry. The combined
algorithm, \textbf{LLQR+SAM}, applies the SAM displacement and the gradient
transport in a single learned metric.

Our central claim is that this two-time-scale combination is
\emph{synergistic}, not merely additive. We make this precise on a model
landscape consisting of a quadratic loss with a smooth average geometry,
supplemented by a sharp localized pothole that creates a spurious minimum
invisible to $U$. On this landscape the LLQR+SAM dynamics is closed-form
and exposes a clean
mechanism: SAM prevents the iterate from settling at any minimum, forcing
it to hover at the scale of the SAM probe itself, while $U$ shapes that
scale according to the average geometry. Around a wide basin the hovering
is appropriate to the basin width and the iterate is stable; around a
pothole the same hovering scale exceeds the pothole's basin of attraction
and the iterate escapes. Crucially, the escape signal is amplified relative
to vanilla SAM by a factor that grows as the surrounding basin becomes
flatter---precisely the regime where preconditioning matters most. The
same $U$ that enables fast convergence in flat directions also enables
pothole escape: $U$ and SAM are not independent ingredients but two
complementary uses of one learned geometry. These predictions are borne
out by the toy experiments in Figures~\ref{fig:toy_sam_escape_comparison} and~\ref{fig:toy_sam_gradient_noise_escape_comparison} and by the empirical study of
Section~\ref{sec:experiments}, which shows consistent gains over either
component alone across CIFAR, TinyImageNet, ImageNet, and IWSLT14.

\paragraph{Contributions.}
(i) We propose LLQR+SAM, combining a slowly-updated, EMA-smoothed
second-order preconditioner with a SAM perturbation evaluated in the
induced geometry. (ii) On a quadratic landscape with a sharp pothole
perturbation we derive the exact LLQR+SAM dynamics and quantify the
escape-amplification of the preconditioner relative to vanilla SAM, with
the gain growing as the surrounding basin becomes flatter. (iii) We
validate empirically that the combination yields consistent gains over
either component alone across standard vision and sequence benchmarks.


\section{Related Work}
\label{sec:related_work}

\paragraph{Sharpness, flatness, and SAM.}
The relation between generalization and loss-landscape sharpness has long motivated methods that bias training toward flatter minima~\citep{hochreiter1997flat,keskar2017largebatch,neyshabur2015path}, although this connection is neither universal nor invariant: sharpness can change under reparameterization without changing the represented function~\citep{dinh2017sharp}. \SAM~\citep{foret2021sam} is the canonical sharpness-aware optimizer, replacing the training objective by a local min--max problem that probes the worst-case loss near the current parameters and descends using the gradient at the perturbed point. Many variants modify either the perturbation neighborhood or the adversarial-point estimate: ASAM~\citep{kwon2021asam} uses a scale-adaptive radius to reduce sensitivity to parameter rescaling, while Friendly SAM~\citep{li2024friendly} reduces minibatch-noise sensitivity by averaging perturbation information across batches. A complementary view comes from gradient-norm penalization: \citet{zhao2022penalizing} argue that penalizing the gradient norm encourages smaller local Lipschitz constants and show that \SAM\ can be recovered as a special case of a loss augmented by a gradient-norm regularizer. 
\paragraph{Geometry-aware sharpness neighborhoods.}
Vanilla \SAM\ defines sharpness through Euclidean neighborhoods in parameter space, which may poorly reflect the effective geometry of neural networks~\citep{li2018visualizing}. Fisher SAM addresses this by replacing the Euclidean ball with an ellipsoid induced by the Fisher information matrix, so the adversarial probe is measured in a geometry tied to the model distribution~\citep{kim2022fishersam}. Riemannian SAM further generalizes this idea by defining the sharpness neighborhood through an arbitrary Riemannian metric~\citep{yun2023riemanniansam}. In this view, Fisher SAM is a special case, and sharpness-aware perturbations are interpreted as steepest-ascent directions under a non-Euclidean local norm. 
\paragraph{Geometry-aware optimization and robustness.}
A separate line of work uses geometry to transport gradients rather than to define adversarial neighborhoods. Natural-gradient descent defines steepest descent with respect to the Fisher information metric~\citep{amari1998natural}. K-FAC makes this principle scalable through layerwise Kronecker-factored Fisher approximations~\citep{martens2015kfac}, while Shampoo maintains tensor-structured preconditioners along parameter dimensions~\citep{gupta2018shampoo}. These methods use curvature 
estimates to precondition the optimization trajectory, but they do not introduce a \SAM-style local adversarial probe. Another related direction controls sensitivity to input perturbations rather than parameter perturbations: spectral normalization and Lipschitz regularization reduce the local sensitivity of the network function to data-space changes~\citep{yoshida2017spectral,virmaux2018lipschitz}. Thus, while geometry-aware optimizers act on the parameter-space training dynamics and Lipschitz methods target input-space robustness, sharpness-aware methods study loss variation under parameter perturbations.


\section{Method}
\label{sec:method}

\paragraph{Metric and notation.}
Let $L:\mathbb{R}^d\!\to\!\mathbb{R}$ denote the training loss, with
$g(\theta):=\nabla L(\theta)$ and $H(\theta):=\nabla^2 L(\theta)$. We write
$P(\theta)\succ0$ for a local optimization metric and
$U(\theta):=P(\theta)^{-1}$ for its inverse. The associated dual norm on gradients is 
\[
    \|g\|_U := (g^\top U g)^{1/2}.
\]
\textbf{LLQR inverse metric.}
The LLQR framework~\citep{Anonymous2026LLQR} (See Appendix~\ref{app:llqr-summary}) learns a structured inverse
preconditioner $U$ from layerwise network dynamics and a divergence on the
loss, such as KL/Fisher or Newton/Bregman. In practice, $U$ is represented by
diagonal or K-FAC
blocks,
trading expressivity for cost. It is updated
only every $n_{\text{LLQR}}\!\sim\!500$ optimizer steps and smoothed by an
exponential moving average, so applying $U$ during training is a structured
matrix-vector product with modest overhead. For the analysis in
Section~\ref{sec:analysis}, the relevant properties are that $U_t$ is positive
definite, uniformly spectrally bounded, and frozen during each
perturb-and-recompute step.

\textbf{Euclidean SAM.}
Standard SAM~\citep{foret2021sam} evaluates the update gradient at the
worst-case first-order perturbation in a Euclidean ball:
\[
\theta^+
=
\theta
+
\rho\,\frac{g(\theta)}{\|g(\theta)\|_2},
\qquad
\theta
\leftarrow
\theta
-
\eta\,g(\theta^+).
\]

\paragraph{LLQR+SAM.}
LLQR+SAM replaces the Euclidean perturbation geometry by the learned metric
$P=U^{-1}$ and uses the same inverse metric to transport the outer gradient:
\begin{equation}
\theta^+
=
\theta
+
\rho\,\frac{U\,g(\theta)}{\|g(\theta)\|_U},
\qquad
\theta
\leftarrow
\theta
-
\eta\,U\,g(\theta^+).
\label{eq:llqrsam}
\end{equation}
The perturbation is therefore the normalized steepest-ascent direction under
the metric $P$, while the descent step applies the same geometry to the
gradient evaluated at the perturbed point.

\textbf{Interpretation.}
Equation~\eqref{eq:llqrsam} can be viewed locally as a Riemannian \SAM\ step in
the LLQR metric, with $U_t$ fixed during the perturb-and-recompute step. The
corresponding fixed-metric transfer from Riemannian
\SAM~\citep{yun2023riemanniansam} gives stationarity in the instantaneous
metric; Appendix~\ref{app:proof_riemannian_transfer} gives the formal
statement. This also separates LLQR+SAM from Fisher
SAM~\citep{kim2022fishersam}: Fisher SAM uses a Fisher-scaled perturbation to
probe sharpness, but typically applies the resulting gradient with the base
optimizer. LLQR+SAM instead uses the learned inverse metric $U_t$ both to
define the perturbation neighborhood and to transport the outer gradient. This
coupling is central to the pothole-escape mechanism analyzed in
Section~\ref{sec:analysis}: the amplification depends on the metric acting on
both the probe and the descent step.

\textbf{Algorithm}
\begin{algorithm}[t]
\caption{\LLQRSAM}
\label{alg:llqrsam}
\begin{algorithmic}[1]
\STATE \textbf{Input:} initial parameters $\vtheta_0$, learning rate $\eta$, \SAM\ radius $\rho$, \LLQR\ metric state, damping and update schedule.
\FOR{$t = 0,1,2,\dots$}
  \STATE Draw minibatch $b_t$ and compute $g_t=\nabla_{\vtheta}L_{b_t}(\vtheta_t)$.
  \STATE Update or query the slow \LLQR\ state to obtain $U_t\approx P_t^{-1}$.
  \STATE Set $\epsilon_t=\rho\,U_tg_t/(g_t^\top U_tg_t)^{1/2}$.
  \STATE Form the probe parameters $\vtheta_t^+=\vtheta_t+\epsilon_t$.
  \STATE Compute the sharpness-aware gradient $\tilde g_t=\nabla_{\vtheta}L_{b_t}(\vtheta_t^+)$.
  \STATE Apply the geometry-aware update $\vtheta_{t+1}=\vtheta_t-\eta\,U_t\tilde g_t$.
\ENDFOR
\STATE \textbf{Output:} trained parameters $\vtheta_T$.
\end{algorithmic}
\end{algorithm}
~\ref{alg:llqrsam} states the core update without momentum, weight
decay, or optimizer-specific bookkeeping.  In implementation, these terms are
composed around the same two operations: use the current \LLQR\ state to define
the perturbation geometry, then apply the preconditioned sharpness-aware
gradient through the geometry-aware base update. 

The wall-clock overhead of
LLQR+SAM relative to the corresponding non-SAM, non-preconditioned run is
approximately one extra forward/backward pass per step (the SAM probe), 
plus the cost of refreshing and applying the LLQR preconditioner. As shown in Fig.~\ref{fig:imagenet_fsam_top1_loss_time} (right), this
additional geometry does not become the bottleneck in practice: when the cost of
\SAM is acceptable, adding \LLQR preserves the same affordability regime
while yielding stronger accuracy. For reproducibility, the implementation is available at \href{https://github.com/SimonDufLab/LLQR}{github.com/SimonDufLab/LLQR}.

\section{Analysis: pothole navigation in a two-scale landscape}
\label{sec:analysis}

We analyze \LLQR+\SAM\ on a minimal two-scale quadratic model: an average
geometry tracked by \LLQR, plus a sharp localized perturbation not captured by
the slowly updated preconditioner.

\paragraph{Two-scale model.}
Place a local minimum at the origin and consider
\begin{equation}
L(\theta)
=
\tfrac12\,\theta^\top H\theta,
\qquad
H=\bar H+H_\epsilon,
\qquad
H\succ0,
\label{eq:quad-loss}
\end{equation}
where $\bar H\succ0$ encodes the smooth, slowly-varying part of the geometry
and $H_\epsilon\succeq0$ is a localized sharp component. We make the following
modeling assumption, motivated by the slow, exponential moving average smoothed nature of the \LLQR\
preconditioner.
\begin{assumption}
\label{assum:U}
The \LLQR\ inverse metric captures the average geometry: $U=\bar H^{-1}$.
\end{assumption}
For simplicity, assume that $\bar H$ and $H_\epsilon$ share the same
eigenbasis. Define
\[
    \lambda_i=\bar\lambda_i+\lambda_{\epsilon,i},
    \qquad
    \mu_i:=\lambda_i\bar\lambda_i^{-1}
    =
    1+\lambda_{\epsilon,i}\bar\lambda_i^{-1}.
\]
Thus $\mu_i\approx1$ in average directions and $\mu_i\gg1$ in pothole
directions. The non-commuting case is handled by whitening with
$\bar H^{-1/2}$; see Appendix~\ref{app:whitened_pothole_model}.

\textbf{Dynamics.}
For~\eqref{eq:quad-loss}, $g(\theta)=H\theta$, the LLQR+SAM recursion is exact for the quadratic loss (no Taylor approximation; see Appendix~\ref{sec:coordinate_recursion}):
\begin{equation}
e_{t+1}
=
(I-\eta UH)e_t
-
\eta\rho\,\|He_t\|_U^{-1}UHUH e_t ,
\label{eq:dyn}
\end{equation}
where $e_t=\theta_t$ is the displacement from the minimum. Restricted to a single eigendirection $v_i$ (i.e., $e_t \propto v_i$),
\begin{equation}
z_{i,t+1}
=
(1-\eta\mu_i)z_{i,t}
-
\eta\rho\,\mu_i\bar\lambda_i^{-1/2}
\operatorname{sign}(z_{i,t}),
\qquad
z_{i,t}:=\langle v_i,e_t\rangle .
\label{eq:scalar-dyn}
\end{equation}

\paragraph{Hovering and escape.}
The scalar map~\eqref{eq:scalar-dyn} has no nonzero fixed point: the
\SAM\ perturbation makes the iterate oscillate around the minimum instead of
settling exactly at it. Its hovering envelope is
\begin{equation}
|z_{i,t}|
\lesssim
\rho\,\bar\lambda_i^{-1/2},
\label{eq:hovering}
\end{equation}
with the exact two-cycle amplitude given in
Appendix~\ref{sec:hovering_env}. This scale comes from a cancellation:
the \SAM\ kick is of size
$\eta\rho\mu_i\bar\lambda_i^{-1/2}$, while the contraction scale is
$\eta\mu_i$, hence
\[
\frac{\eta\rho\mu_i\bar\lambda_i^{-1/2}}{\eta\mu_i}
=
\rho\bar\lambda_i^{-1/2}.
\]
Thus the hovering scale is controlled by the average curvature
$\bar\lambda_i$, not by the localized sharpness $\lambda_{\epsilon,i}$.
Consequently, if a pothole has basin radius $r_\epsilon$, the iterate is no longer trapped in the pothole whenever
\begin{equation}
\rho\,\bar\lambda_\epsilon^{-1/2}
>
r_\epsilon .
\label{eq:escape}
\end{equation}

\paragraph{Comparison with vanilla \SAM.}
For vanilla \SAM, the same calculation gives a Euclidean hovering envelope of
order $\rho$ in every direction; see
Appendix~\ref{app:vanilla_sam_comparison}. Therefore \LLQR+\SAM\ amplifies the
pothole-direction escape scale by $\bar\lambda_\epsilon^{-1/2}$.

\begin{corollary}[Pothole-escape amplification; see Cor.~\ref{cor:amplification_app}]
\label{cor:amplification}
Under Assumption~\ref{assum:U} and the commuting hypothesis, the
\LLQR+\SAM\ hovering envelope around a pothole minimum is
\[
\rho\,\bar\lambda_\epsilon^{-1/2}
=
\bar\lambda_\epsilon^{-1/2}
\times
\bigl(\text{vanilla-\SAM\ hovering envelope}\bigr).
\]
The amplification depends only on the average curvature
$\bar\lambda_\epsilon$ in the pothole direction.
\end{corollary}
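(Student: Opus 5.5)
The plan is to derive the hovering envelope exactly for both algorithms in the single-eigendirection reduction, and then take the ratio. Start from the quadratic recursion~\eqref{eq:dyn}, which is exact because $g(\theta)=H\theta$ is linear. Under Assumption~\ref{assum:U} and the commuting hypothesis, $U=\bar H^{-1}$, $UH$ and $UHUH$ are diagonal in the common eigenbasis $\{v_i\}$ with entries $\mu_i$ and $\mu_i^2$. For $e_t=z_{i,t}v_i$ we have $\|He_t\|_U=\lambda_i\bar\lambda_i^{-1/2}|z_{i,t}|$. Substituting gives
\[
\eta\rho\,\mu_i^2 z_{i,t}\big/\bigl(\lambda_i\bar\lambda_i^{-1/2}|z_{i,t}|\bigr)=\eta\rho\,\mu_i\bar\lambda_i^{-1/2}\operatorname{sign}(z_{i,t}),
\]
which is the scalar map~\eqref{eq:scalar-dyn}. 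The sequence stays on the ray spanned by $v_i$, so the reduction is invariant.

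Next I would analyze the piecewise-affine map $f(z)=(1-\eta\mu_i)z-\kappa\operatorname{sign}(z)$ with $\kappa:=\eta\rho\mu_i\bar\lambda_i^{-1/2}$, taking $0<\eta\mu_i<2$. There is no nonzero fixed point: solving $\eta\mu_i z=-\kappa\operatorname{sign}(z)$ forces $z$ and $\operatorname{sign}(z)$ to have opposite signs, which is impossible. Look instead for a symmetric two-cycle $z\mapsto -z\mapsto z$ with $z>0$. The condition $(1-\eta\mu_i)z-\kappa=-z$ gives
\[
z^\star=\frac{\kappa}{2-\eta\mu_i}=\frac{\eta\mu_i}{2-\eta\mu_i}\,\rho\,\bar\lambda_i^{-1/2}.
\]
To show that this $z^\star$ is the envelope, I would check two facts. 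First, $|f(z)|\le(1-\eta\mu_i)|z|+\kappa$ when $|z|$ is large, so orbits contract geometrically into the band $|z|\lesssim\kappa/(\eta\mu_i)=\rho\bar\lambda_i^{-1/2}$. Second, the band is invariant up to the constant factor $\eta\mu_i/(2-\eta\mu_i)$, which is $O(1)$ in the stable step-size regime. This yields~\eqref{eq:hovering} and the exact two-cycle amplitude.

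The vanilla-SAM comparison repeats the computation with $U=I$. Then $\|He_t\|_2=\lambda_i|z|$, and the SAM term becomes $\eta\rho\lambda_i^2 z/(\lambda_i|z|)=\eta\rho\lambda_i\operatorname{sign}(z)$. The map is $f(z)=(1-\eta\lambda_i)z-\eta\rho\lambda_i\operatorname{sign}(z)$, whose two-cycle amplitude is $\frac{\eta\lambda_i}{2-\eta\lambda_i}\rho$ and whose envelope is of order $\rho$, independent of curvature. Taking $i$ to be the pothole direction, with $\bar\lambda_i=\bar\lambda_\epsilon$, and dividing the envelopes $\rho\bar\lambda_\epsilon^{-1/2}$ and $\rho$ gives the factor $\bar\lambda_\epsilon^{-1/2}$. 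The local sharpness $\lambda_{\epsilon,i}$ has cancelled through $\mu_i$, which gives the final claim that the amplification depends only on $\bar\lambda_\epsilon$.

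The main obstacle is making ``envelope'' precise enough for the ratio to be exactly $\bar\lambda_\epsilon^{-1/2}$. The exact two-cycle amplitudes carry different prefactors, $\eta\mu_i/(2-\eta\mu_i)$ for LLQR+SAM versus $\eta\lambda_i/(2-\eta\lambda_i)$ for vanilla SAM, and these are also the $O(1)$ constants hidden in the envelope bounds. Their ratio is not one, because the effective step sizes $\eta\mu_i$ and $\eta\lambda_i$ differ, and in particular vanilla SAM may need a much smaller $\eta$ for stability. I would therefore state the corollary at the level of the envelope scale $\kappa/(\text{contraction})$, namely $\rho\bar\lambda_\epsilon^{-1/2}$ versus $\rho$, which is what~\eqref{eq:hovering} records. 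I would relegate the exact amplitude ratio, which holds up to these step-size-dependent constants, to a remark.
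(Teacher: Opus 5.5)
Your proposal is correct and follows the paper's own route: the exact reduction to the scalar sign map, the absence of a nonzero fixed point, the two-cycle amplitude $b/(1+a)$, and the envelope scale $b/(1-a)=\rho\bar\lambda_i^{-1/2}$ compared with $\rho$ for vanilla SAM. Your remark that the exact two-cycle amplitudes carry different step-size-dependent prefactors is fair, and the paper sidesteps it in the same way by stating the corollary at the envelope scale.

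One small slip: your bound $|f(z)|\le(1-\eta\mu_i)|z|+\kappa$ needs $\eta\mu_i\le 1$, which is the paper's assumption $a\in(0,1)$, not the wider range $0<\eta\mu_i<2$ you state. For $\eta\mu_i\in(1,2)$ one has $|f(z)|=|1-\eta\mu_i|\,|z|+\kappa$ exactly, and $|z_t|$ converges to $\kappa/(2-\eta\mu_i)$, which exceeds $\rho\bar\lambda_i^{-1/2}$.
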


Thus retuning vanilla \SAM\ cannot reproduce the effect selectively:
increasing $\rho$ enlarges the probe in all directions, whereas \LLQR+\SAM\
uses the same $U\approx\bar H^{-1}$ for direction-adaptive probing and
direction-adaptive contraction.

\paragraph{Toy illustrations.}
Figures~\ref{fig:toy_sam_escape_comparison} and
\ref{fig:toy_sam_gradient_noise_escape_comparison} instantiate the mechanism on
a two-dimensional sharp-well landscape. The non-\SAM\ variants remain trapped,
whereas \SAM\ variants escape. With injected gradient noise, \LLQR+\SAM\
reaches the flat basin with shorter path length than vanilla \SAM, consistent
with Corollary~\ref{cor:amplification}. A stochastic selection model is given
in Appendix~\ref{app:selection}.

\begin{figure*}[t]
  \centering
  \includegraphics[width=0.96\textwidth]{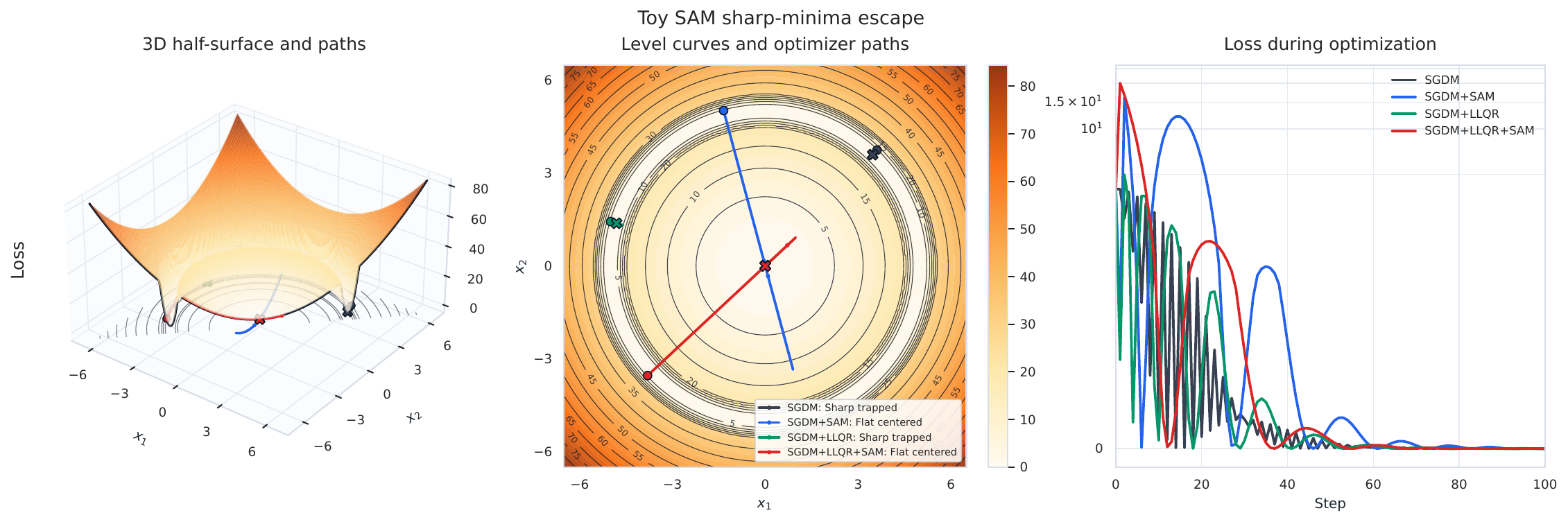}
  \caption{\textbf{Toy sharp-well escape mechanism.} The surface has a flat basin at the origin and a sharp annular basin near radius \(5\). All four optimizers use the same learning rate, and the SAM variants use the same radius, with starts chosen in the sharp basin, but not at minima. The non-SAM variants remain trapped, while the SAM variants leave the sharp well; the \LLQR+\SAM\ trajectory reaches the flat region with faster loss decay in this setting.}
  \label{fig:toy_sam_escape_comparison}
\end{figure*}

\paragraph{Stochastic counterpart: selection between wells.}

The deterministic analysis above shows that pothole minima are not
attracting fixed points of LLQR+SAM. Under stochastic gradients this
refines into a \emph{selection} statement: sharp wells are short-lived
(entered briefly through gradient noise, exited through the SAM kick),
while flat wells support long visits. The same hovering-vs-basin
comparison~\eqref{eq:escape} controls the expected exit time, with
LLQR+SAM benefiting from the amplification of
Corollary~\ref{cor:amplification}. We give the regenerative model and the
exit-time bound in Appendix~\ref{app:selection}; the gradient-noise toy of
Figure~\ref{fig:toy_sam_gradient_noise_escape_comparison} is consistent with the prediction that
LLQR+SAM exits the sharp well faster than vanilla SAM and reaches the
flat basin with a markedly shorter path length.

\begin{figure*}[t]
  \centering
  \includegraphics[width=0.96\textwidth]{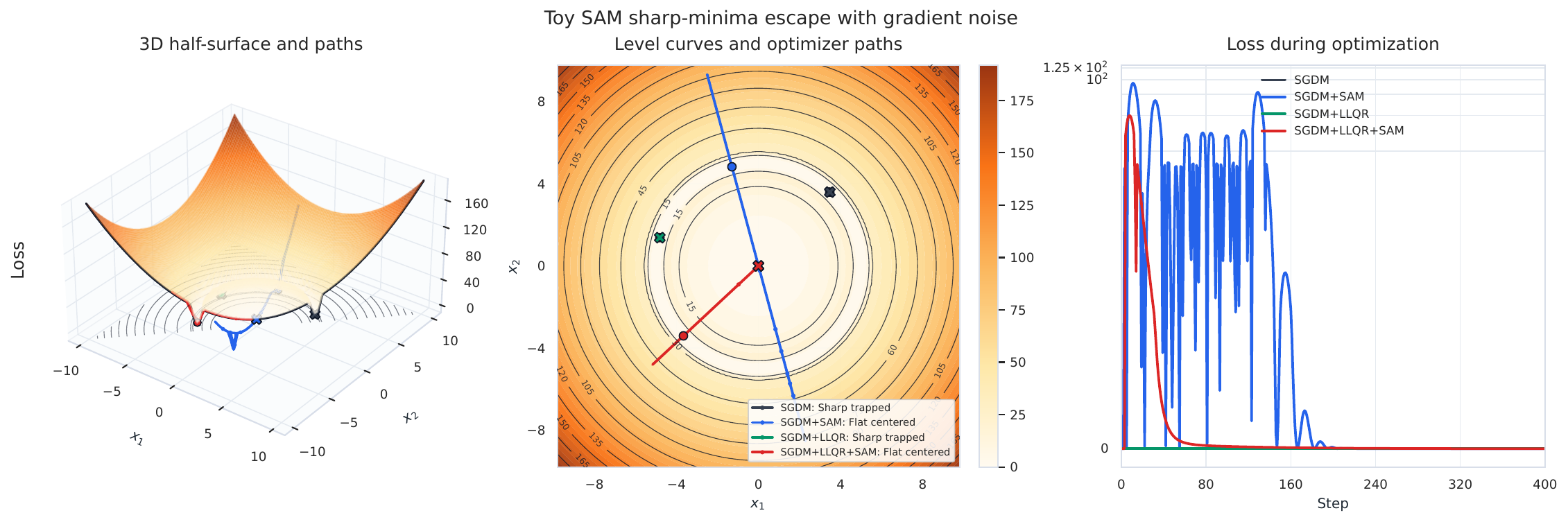}
  \caption{\textbf{Gradient-noise escape from the sharp minimum.} All variants start at the bottom of the sharp well and receive a shared deterministic Gaussian perturbation schedule in the update gradient. The non-SAM variants remain near the sharp well, while SAM variants are ejected. At variance \(10^{-9}\), both SAM variants reach the flat basin, but \LLQR+\SAM\ has substantially shorter path length than Euclidean SAM, consistent with the stochastic selection picture in Appendix~\ref{app:selection}.}
  \label{fig:toy_sam_gradient_noise_escape_comparison}
\end{figure*}


\section{Experiments}
\label{sec:experiments}
In this section, we evaluate LLQR+SAM on standard supervised vision benchmarks (CIFAR-10/100, TinyImageNet, ImageNet), a sequence-modeling task (IWSLT14 De-En), 
and a scalability study on ViT models up to ViT-Large/16. Throughout, LLQR+SAM reuses the hyperparameters of its LLQR and SAM components without method-specific retuning, isolating the effect of the geometry-sharpness coupling. Full training protocols are deferred to Appendix~\ref{app:experiments-details}. Additional experiments about noise injection and method scalability are presentend in Appendix~\ref{app:add_results}. 

\subsection{Standard Supervised Training Benchmarks}

\textbf{CIFAR Datasets.}
We first evaluate the method on CIFAR-10 and CIFAR-100~\citep{krizhevsky2009learning} with
VGG16-BN~\citep{simonyan2015very}, ResNet-18~\citep{he2016deep},
WRN-28-10~\citep{zagoruyko2016wide}, and
PyramidNet-110~\citep{han2017deep}; the full protocol is deferred to
Appendix~\ref{app:standard-supervised-training-details}.
\begin{table}[t]
  \centering
  \caption{CIFAR-10 standard supervised benchmark. Entries are best top-1 test
    accuracy (mean $\pm$ std, 5 seeds). \textcolor{LLQRSAMNGD}{Blue} denotes NGD-induced geometry, \textcolor{LLQRSAMNewton}{Orange} denotes Newton-induced geometry.}
  \label{tab:cifar10-standard-benchmarks}
  \small
  \resizebox{\textwidth}{!}{%
  \begin{tabular}{@{}lccccccc@{}}
    \toprule
    & \multicolumn{3}{c}{Base geometry}
    & \multicolumn{2}{c}{Sharpness baselines}
    & \multicolumn{2}{c}{Combined geometry and sharpness} \\
    \cmidrule(lr){2-4}\cmidrule(lr){5-6}\cmidrule(l){7-8}
    Architecture
      & \textsc{SGDM}
      & \textcolor{LLQRSAMNGD}{\LLQR}
      & \textcolor{LLQRSAMNewton}{\LLQR}
      & \textsc{SAM}
      & \textsc{FisherSAM}
      & \textcolor{LLQRSAMNGD}{\LLQRSAM}
      & \textcolor{LLQRSAMNewton}{\LLQRSAM}\\
    \midrule
    VGG16-BN
      & $95.10 \pm 0.14$
      & $95.27 \pm 0.19$
      & $95.41 \pm 0.11$
      & $95.54 \pm 0.14$
      & $95.45 \pm 0.04$
      & \textcolor{LLQRSAMNGD}{$\textbf{95.92} \pm 0.07$}
      & \textcolor{LLQRSAMNewton}{$95.73 \pm 0.08$} \\
    ResNet-18
      & $96.14 \pm 0.19$
      & $96.37 \pm 0.11$
      & $96.37 \pm 0.11$
      & $96.65 \pm 0.16$
      & $96.74 \pm 0.14$
      & \textcolor{LLQRSAMNGD}{$\textbf{96.85} \pm 0.09$}
      & \textcolor{LLQRSAMNewton}{$96.78 \pm 0.03$} \\
    WRN-28-10
      & $96.94 \pm 0.09$
      & $97.04 \pm 0.08$
      & $97.10 \pm 0.03$
      & $97.43 \pm 0.09$
      & $97.41 \pm 0.04$
      & \textcolor{LLQRSAMNGD}{$\textbf{97.64} \pm 0.08$}
      & \textcolor{LLQRSAMNewton}{$97.52 \pm 0.07$} \\
    PyramidNet-110
      & $97.18 \pm 0.17$
      & $97.21 \pm 0.08$
      & $97.26 \pm 0.10$
      & $97.63 \pm 0.09$
      & $97.57 \pm 0.12$
      & \textcolor{LLQRSAMNGD}{$\textbf{97.86} \pm 0.04$}
      & \textcolor{LLQRSAMNewton}{$97.66 \pm 0.08$} \\
    \bottomrule
  \end{tabular}%
  }
  \vspace{-1.0em}
\end{table}
\begin{table}[t]
  \centering
  \caption{CIFAR-100 standard supervised benchmark. Entries are best top-1 test
    accuracy (mean $\pm$ std, 5 seeds). \textcolor{LLQRSAMNGD}{Blue} denotes NGD-induced geometry, \textcolor{LLQRSAMNewton}{Orange} denotes Newton-induced geometry.}
  \label{tab:cifar100-standard-benchmarks}
  \small
  \resizebox{\textwidth}{!}{%
  \begin{tabular}{@{}lccccccc@{}}
    \toprule
    & \multicolumn{3}{c}{Base geometry}
    & \multicolumn{2}{c}{Sharpness baselines}
    & \multicolumn{2}{c}{Combined geometry and sharpness} \\
    \cmidrule(lr){2-4}\cmidrule(lr){5-6}\cmidrule(l){7-8}
    Architecture
      & \textsc{SGDM}
      & \textcolor{LLQRSAMNGD}{\LLQR}
      & \textcolor{LLQRSAMNewton}{\LLQR}
      & \textsc{SAM}
      & \textsc{FisherSAM}
      & \textcolor{LLQRSAMNGD}{\LLQRSAM}
      & \textcolor{LLQRSAMNewton}{\LLQRSAM} \\
    \midrule
    VGG16-BN
      & $75.65 \pm 0.28$
      & $76.32 \pm 0.35$
      & $76.31 \pm 0.11$
      & $76.70 \pm 0.18$
      & $76.70 \pm 0.10$
      & \textcolor{LLQRSAMNGD}{$\textbf{78.14} \pm 0.27$}
      & \textcolor{LLQRSAMNewton}{$77.67 \pm 0.27$} \\
    ResNet-18
      & $79.27 \pm 0.18$
      & $79.53 \pm 0.39$
      & $79.83 \pm 0.18$
      & $81.13 \pm 0.21$
      & $80.89 \pm 0.31$
      & \textcolor{LLQRSAMNGD}{$\textbf{81.82} \pm 0.14$}
      & \textcolor{LLQRSAMNewton}{$81.65 \pm 0.10$} \\
    WRN-28-10
      & $82.50 \pm 0.18$
      & $82.80 \pm 0.37$
      & $82.72 \pm 0.24$
      & $84.83 \pm 0.06$
      & $84.71 \pm 0.13$
      & \textcolor{LLQRSAMNGD}{$\textbf{85.30} \pm 0.14$}
      & \textcolor{LLQRSAMNewton}{$85.05 \pm 0.12$} \\
    PyramidNet-110
      & $83.96 \pm 0.25$
      & $84.40 \pm 0.46$
      & $84.00 \pm 0.16$
      & $86.20 \pm 0.18$
      & $85.92 \pm 0.10$
      & \textcolor{LLQRSAMNGD}{$\textbf{86.67} \pm 0.21$}
      & \textcolor{LLQRSAMNewton}{$86.53 \pm 0.05$} \\
    \bottomrule
  \end{tabular}%
  }
\end{table}
Tables~\ref{tab:cifar10-standard-benchmarks} and
\ref{tab:cifar100-standard-benchmarks} show that \LLQRSAM consistently
improves over either component alone, indicating that coupling \SAM with the
learned \LLQR geometry is more effective than applying either mechanism in
isolation. Across CIFAR-10 and CIFAR-100, the NGD
variant delivers the strongest results, suggesting that the natural-gradient geometry provides the more effective steepest descent metric in those setups.

\providecommand{\TinyImageNetPlotDir}{tinyimagenet_plots}
\textbf{TinyImageNet.}
We next scale the ResNet family networks to TinyImageNet-200 to test whether the
geometry--sharpness coupling remains effective beyond CIFAR.
\begin{table}[t]
  \centering
  \caption{TinyImageNet-200 ResNet-family benchmark. Entries are best top-1 test
    accuracy (mean $\pm$ std, 3 seeds). \textcolor{LLQRSAMNGD}{Blue}
    denotes NGD-induced geometry, \textcolor{LLQRSAMNewton}{Orange} denotes
    Newton-induced geometry.}
  \label{tab:tinyimagenet-resnet-benchmarks}
  \small
  \resizebox{\textwidth}{!}{%
  \begin{tabular}{@{}lcccccc@{}}
    \toprule
    & \multicolumn{3}{c}{Base geometry}
    & \multicolumn{1}{c}{Sharpness baseline}
    & \multicolumn{2}{c}{Combined geometry and sharpness} \\
    \cmidrule(lr){2-4}\cmidrule(lr){5-5}\cmidrule(l){6-7}
    Architecture
      & \textsc{SGDM}
      & \textcolor{LLQRSAMNGD}{\LLQR}
      & \textcolor{LLQRSAMNewton}{\LLQR}
      & \textsc{SAM}
      & \textcolor{LLQRSAMNGD}{\LLQRSAM}
      & \textcolor{LLQRSAMNewton}{\LLQRSAM} \\
    \midrule
    ResNet-18
      & $57.97 \pm 0.28$
      & \textcolor{LLQRSAMNGD}{$57.02 \pm 0.49$}
      & \textcolor{LLQRSAMNewton}{$57.62 \pm 0.47$}
      & $61.17 \pm 0.27$
      & \textcolor{LLQRSAMNGD}{$\textbf{62.24} \pm 0.49$}
      & \textcolor{LLQRSAMNewton}{$61.98 \pm 0.22$} \\
    WRN-28-10
      & $61.06 \pm 0.37$
      & \textcolor{LLQRSAMNGD}{$60.08 \pm 0.32$}
      & \textcolor{LLQRSAMNewton}{$60.37 \pm 0.33$}
      & $63.91 \pm 0.41$
      & \textcolor{LLQRSAMNGD}{$\textbf{64.65} \pm 0.51$}
      & \textcolor{LLQRSAMNewton}{$64.59 \pm 0.25$} \\
    \bottomrule
  \end{tabular}%
  }
  \vspace{-1.0em}
\end{table}
Table~\ref{tab:tinyimagenet-resnet-benchmarks} shows that the CIFAR trend
persists when scaling to TinyImageNet. \SAM gives a strong gain over
\textsc{SGDM}, while \LLQR alone is not sufficient or not well-tuned in this setting. Yet, the
combined \LLQRSAM variants improve over \SAM on both architectures, with the
NGD-induced geometry giving the best result in each case.  This is a
particularly stringent test of complementarity: even when \LLQR alone does not
improve over \textsc{SGDM}, its geometry still strengthens \SAM, suggesting that
the learned geometry provides useful information that is not captured by
the sharpness perturbation alone.

\textbf{Synergy with \SAM variants.}
The \SAM framework has led to a broad family of perturb-and-recompute
optimizers, many differing only in how the adversarial direction is filtered or
normalized.  To check that \LLQR is not merely tuned to vanilla \SAM, we pair it
with \textsc{F-SAM}~\citep{li2024friendly}, a strong \SAM variant.  Performance
is compared with and without the learned \LLQR geometry, using vanilla \SAM as
a reference in Table~\ref{tab:cifar-sam-variant-synergy}.

\begin{table}[t]
  \centering
  \caption{CIFARs Friendly-SAM synergy benchmark. Entries are best top-1 test
    accuracy (mean $\pm$ std, 5 seeds). \textcolor{LLQRSAMNGD}{Blue}
    denotes NGD-induced \LLQR geometry.}
  \label{tab:cifar-sam-variant-synergy}
  \scriptsize
  \resizebox{\textwidth}{!}{%
  \begin{tabular}{@{}llcccc@{}}
    \toprule
    & & \multicolumn{2}{c}{Without \LLQR}
      & \multicolumn{2}{c}{With \LLQR geometry} \\
    \cmidrule(lr){3-4}\cmidrule(l){5-6}
    Dataset & Architecture
      & \textsc{SAM}
      & \textsc{F-SAM}
      & \textcolor{LLQRSAMNGD}{\LLQRSAM}
      & \textcolor{LLQRSAMNGD}{\LLQR+\textsc{F-SAM}} \\
    \midrule
    CIFAR-10 & VGG16-BN
      & $95.54 \pm 0.14$
      & $95.58 \pm 0.09$
      & \textcolor{LLQRSAMNGD}{$\textbf{95.92} \pm 0.07$}
      & \textcolor{LLQRSAMNGD}{$95.81 \pm 0.11$} \\
    CIFAR-10 & ResNet-18
      & $96.65 \pm 0.16$
      & $96.60 \pm 0.05$
      & \textcolor{LLQRSAMNGD}{$96.85 \pm 0.09$}
      & \textcolor{LLQRSAMNGD}{$\textbf{96.88} \pm 0.11$} \\
    CIFAR-10 & WRN-28-10
      & $97.43 \pm 0.09$
      & $97.49 \pm 0.08$
      & \textcolor{LLQRSAMNGD}{$97.64 \pm 0.08$}
      & \textcolor{LLQRSAMNGD}{$\textbf{97.68} \pm 0.07$} \\
    CIFAR-10 & PyramidNet-110
      & $97.63 \pm 0.09$
      & $97.65 \pm 0.07$
      & \textcolor{LLQRSAMNGD}{$97.86 \pm 0.04$}
      & \textcolor{LLQRSAMNGD}{$\textbf{97.89} \pm 0.06$} \\
    \midrule
    CIFAR-100 & VGG16-BN
      & $76.70 \pm 0.18$
      & $76.95 \pm 0.32$
      & \textcolor{LLQRSAMNGD}{$78.14 \pm 0.27$}
      & \textcolor{LLQRSAMNGD}{$\textbf{78.22} \pm 0.31$} \\
    CIFAR-100 & ResNet-18
      & $81.13 \pm 0.21$
      & $81.16 \pm 0.12$
      & \textcolor{LLQRSAMNGD}{$81.82 \pm 0.14$}
      & \textcolor{LLQRSAMNGD}{$\textbf{82.00} \pm 0.28$} \\
    CIFAR-100 & WRN-28-10
      & $84.83 \pm 0.06$
      & $84.86 \pm 0.16$
      & \textcolor{LLQRSAMNGD}{$\textbf{85.30} \pm 0.14$}
      & \textcolor{LLQRSAMNGD}{$85.21 \pm 0.20$} \\
    CIFAR-100 & PyramidNet-110
      & $86.20 \pm 0.18$
      & $86.23 \pm 0.08$
      & \textcolor{LLQRSAMNGD}{$86.67 \pm 0.21$}
      & \textcolor{LLQRSAMNGD}{$\textbf{86.76} \pm 0.13$} \\
    \bottomrule
  \end{tabular}%
  }
\end{table}

\begin{figure*}[b]
  \vspace{-1.0em}
  \centering
  \includegraphics[width=0.9\textwidth]{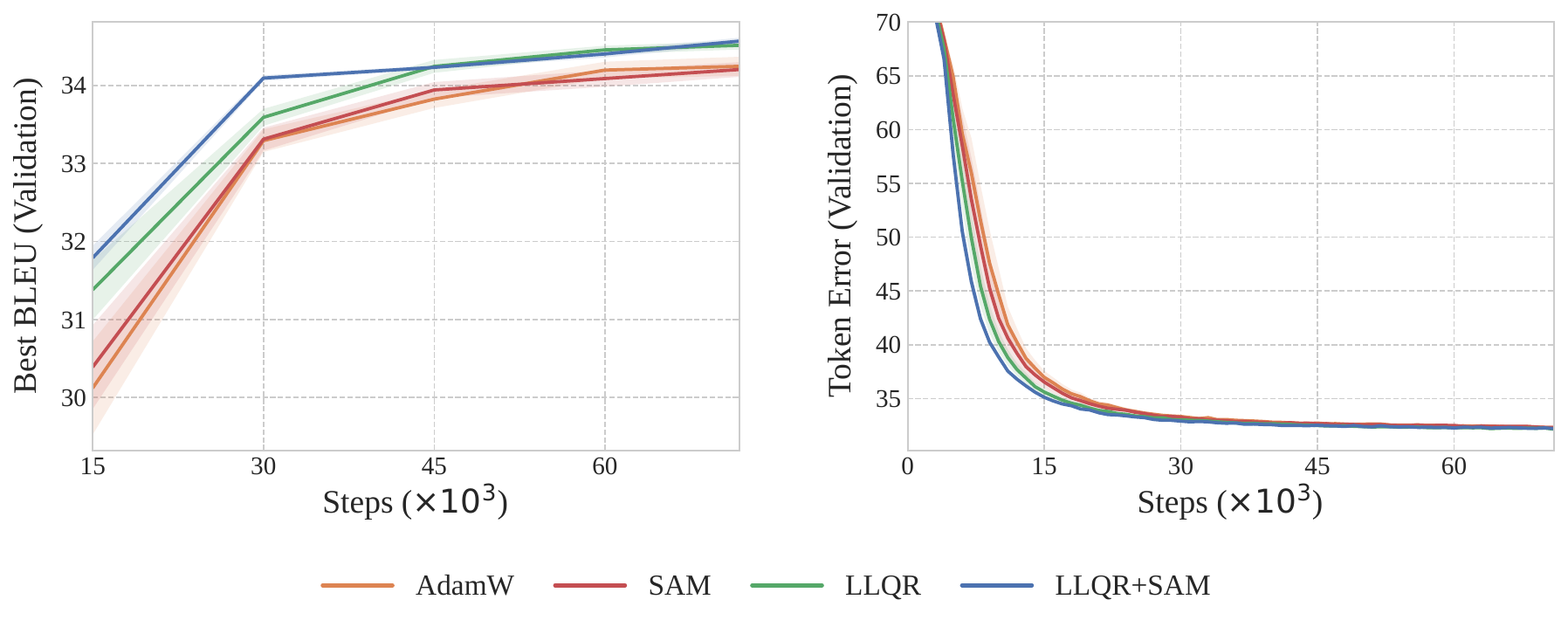}
  \caption{\textbf{IWSLT14 German-to-English convergence.}
  Validation BLEU and token error curves for the fairseq Transformer benchmark.
  Pairing \LLQR with \SAM accelerates optimization while offering the modest
  best-performance gains reported in Table~\ref{tab:iwslt14-base-sam-llqr-ngd}}
  \label{fig:iwslt14_de_en_curves}
\end{figure*}
\textbf{ImageNet.}
We also extend the comparison to ResNet-50 on
ImageNet~\citep{imagenetDeng2009}.  As shown in
Table~\ref{tab:imagenet-fsam-synergy}, adding \LLQR to \textsc{F-SAM}
again improves performance over the sharpness-aware baseline.  The full top-1
error trajectory in Fig.~\ref{fig:imagenet_fsam_top1_loss_time} (left) makes
the effect even clearer: the paired method not only reaches better
generalization, but also converges faster across early and mid training. These
results suggest that \LLQR and \SAM-family pairing synergize throughout the entire training process, even at ImageNet scale.

\begin{table}[t]
  \centering
  \caption{ImageNet/ResNet-50 benchmark. Entries are best top-1 test accuracy
    (mean $\pm$ std, 3 seeds). \textcolor{LLQRSAMNGD}{Blue} denotes NGD-induced
    geometry, \textcolor{LLQRSAMNewton}{Orange} denotes Newton-induced
    geometry. \LLQR rows use the E-KFAC block structure.}
  \label{tab:imagenet-fsam-synergy}
  \small
  \resizebox{\textwidth}{!}{%
  \begin{tabular}{@{}lcccccc@{}}
    \toprule
    & \multicolumn{3}{c}{Base geometry}
    & \multicolumn{1}{c}{Sharpness baseline}
    & \multicolumn{2}{c}{Combined geometry and sharpness} \\
    \cmidrule(lr){2-4}\cmidrule(lr){5-5}\cmidrule(l){6-7}
    Architecture
      & \textsc{SGDM}
      & \textcolor{LLQRSAMNGD}{\LLQR}
      & \textcolor{LLQRSAMNewton}{\LLQR}
      & \textsc{F-SAM}
      & \textcolor{LLQRSAMNGD}{\LLQR+\textsc{F-SAM}}
      & \textcolor{LLQRSAMNewton}{\LLQR+\textsc{F-SAM}} \\
    \midrule
    ResNet-50
      & $77.60 \pm 0.31$
      & \textcolor{LLQRSAMNGD}{$78.05 \pm 0.12$}
      & \textcolor{LLQRSAMNewton}{$76.93 \pm 0.42$}
      & $77.92 \pm 0.07$
      & \textcolor{LLQRSAMNGD}{$\textbf{78.61} \pm 0.16$}
      & \textcolor{LLQRSAMNewton}{$78.19 \pm 0.14$} \\
    \bottomrule
  \end{tabular}%
  }
\end{table}
\textbf{IWSLT14 German-to-English.}
To assess if the geometry--sharpness coupling transfers beyond vision, we
evaluate on the fairseq IWSLT14 German-to-English Transformer benchmark.  As
shown in Table~\ref{tab:iwslt14-base-sam-llqr-ngd}, the gains are
more modest than in image classification, but \LLQRSAM still achieves the best
BLEU. More strikingly, the convergence curves in Fig.~\ref{fig:iwslt14_de_en_curves}
show faster optimization when \LLQR and \SAM are paired, extending the same pattern
observed in vision to a sequence-modeling benchmark.

\begin{table}[t]
  \centering
  \caption{IWSLT14 German-to-English Transformer benchmark. Entries are
    best validation BLEU (mean $\pm$ std, 5 seeds). \textcolor{LLQRSAMNGD}{Blue}
    denotes NGD-induced \LLQR geometry.}
  \label{tab:iwslt14-base-sam-llqr-ngd}
  \small
  \resizebox{\textwidth}{!}{%
  \begin{tabular}{@{}lcccc@{}}
    \toprule
    & \multicolumn{2}{c}{Base geometry}
    & \multicolumn{1}{c}{Sharpness baseline}
    & \multicolumn{1}{c}{Combined geometry and sharpness} \\
    \cmidrule(lr){2-3}\cmidrule(lr){4-4}\cmidrule(l){5-5}
    Dataset
      & \textsc{AdamW}
      & \textcolor{LLQRSAMNGD}{\LLQR}
      & \textsc{SAM}
      & \textcolor{LLQRSAMNGD}{\LLQRSAM} \\
    \midrule
    IWSLT14 De-En
      & $34.24 \pm 0.27$
      & \textcolor{LLQRSAMNGD}{$34.51 \pm 0.12$}
      & $34.20 \pm 0.19$
      & \textcolor{LLQRSAMNGD}{$\textbf{34.57} \pm 0.09$} \\
    \bottomrule
  \end{tabular}%
  }
\end{table}

\subsection{Isolating the role of geometry in probe and transport: FisherSAM Ablation}
\begin{wraptable}{r}{0.35\textwidth}
  \vspace{-2.0em}
  \centering
  \caption{ResNet-18/CIFAR-100 FisherSAM-style perturbation ablation. Entries are best top-1 accuracy (mean $\pm$ std, five seeds). \textcolor{LLQRSAMNGD}{$\mathrm{LLQR}^{\Delta}$+\SAM} uses LLQR preconditioner only in the perturbation step, similar to FisherSAM.}
  \label{tab:fisher-sam-perturbation-only}
  \scriptsize
  \resizebox{0.35\textwidth}{!}{
  \begin{tabular}{@{}lc@{}}
    \toprule
    Method & Top-1 accuracy \\
    \midrule
    \textsc{FisherSAM} & $80.89 \pm 0.31$ \\
    \textcolor{LLQRSAMNGD}{$\mathrm{LLQR}^{\Delta}$+\SAM} & \textcolor{LLQRSAMNGD}{$81.23 \pm 0.14$} \\
    \textcolor{LLQRSAMNGD}{\LLQRSAM} & \textcolor{LLQRSAMNGD}{$\textbf{81.82} \pm 0.14$} \\
    \bottomrule
  \end{tabular}
  }
  \vspace{-3.5em}
\end{wraptable}

FisherSAM uses its geometry only to define the perturbation; LLQR+SAM additionally uses it to transport the outer update. To separate these two roles, we introduce
\(\mathrm{LLQR}^{\Delta}\)+\SAM, which applies the learned preconditioner only in the perturbation step (matching the FisherSAM pattern).  On ResNet-18/CIFAR-100 (Table~\ref{tab:fisher-sam-perturbation-only}),  \(\mathrm{LLQR}^{\Delta}\)+\SAM already improves over FisherSAM--the LLQR geometry is itself a better probe--and full \LLQRSAM improves further, showing that transporting the outer update in the same geometry adds a separate gain.

\subsection{Scalability}

To assess scalability, we evaluate the computational and memory footprint of \LLQR and \LLQRSAM on ViT models of increasing size~\citep{dosovitskiy2020image}, from
ViT-Tiny/16 to ViT-Large/16,  on
ImageNet-scale data. We compare LLQR and LLQR+SAM against SGD, AdamW, and SAM, measuring wall-clock time across update cadences.  Figure~\ref{fig:lqr-time-complexity} reports the
steady preconditioner-update time as a function of model size: empirically, the recurring LLQR cost scales as $\mathcal{O}(P^{1.27\text{--}1.28})$
on the number of parameters 
$P$, closer to linear than to the quadratic scaling typically associated with second-order methods. Pairing LLQR with SAM does not change this scaling. Memory is controlled by the choice of preconditioner block structure (E-KFAC, K-FAC, diag-KFAC, diagonal, etc.);
Appendix~\ref{appsec:scalability} details the storage tradeoffs and reports the cadence sweep across update intervals.
We also provide cadence results in Table~\ref{tab:vit-bl16-cadence-details} (Appendix~\ref{appsec:scalability}) and
Fig.~\ref{fig:appendix-vit-cadence}, to be read as wall-clock overhead
measurements, not as accuracy or convergence comparisons.  As expected, the
cost decreases when \LLQR updates are performed less frequently, since fewer
preconditioner refreshes are executed per epoch. This steady-update scaling trend is further exposed in
Fig.~\ref{fig:lqr-time-complexity}.

\begin{figure}[t]
  \centering
  \includegraphics[width=0.49\linewidth]{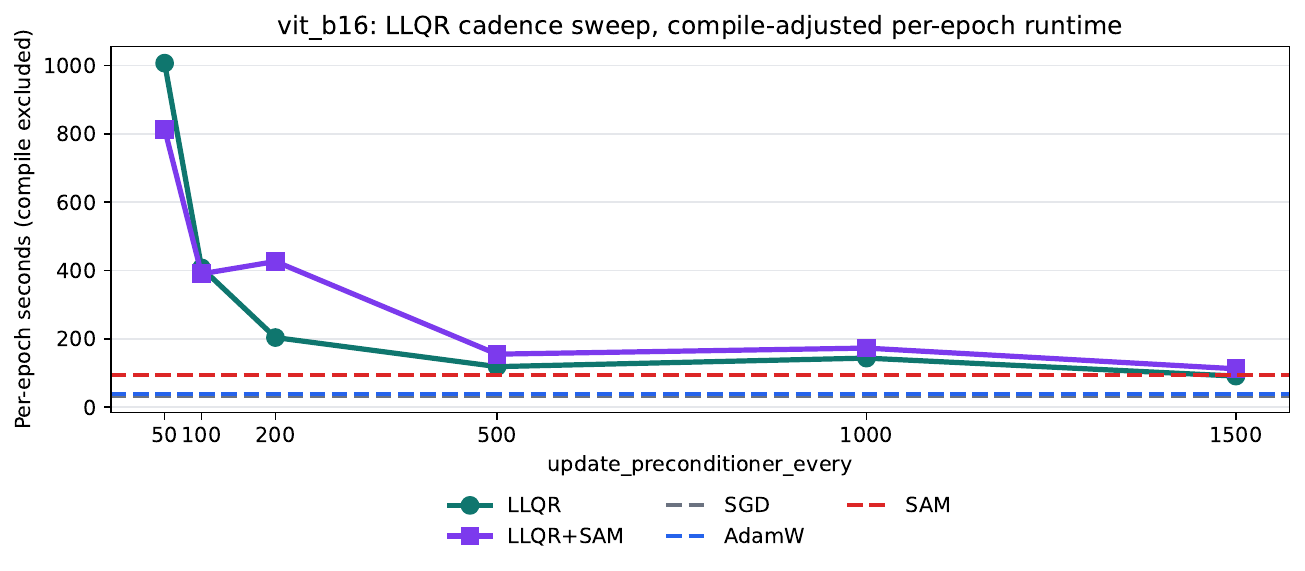}
  \includegraphics[width=0.49\linewidth]{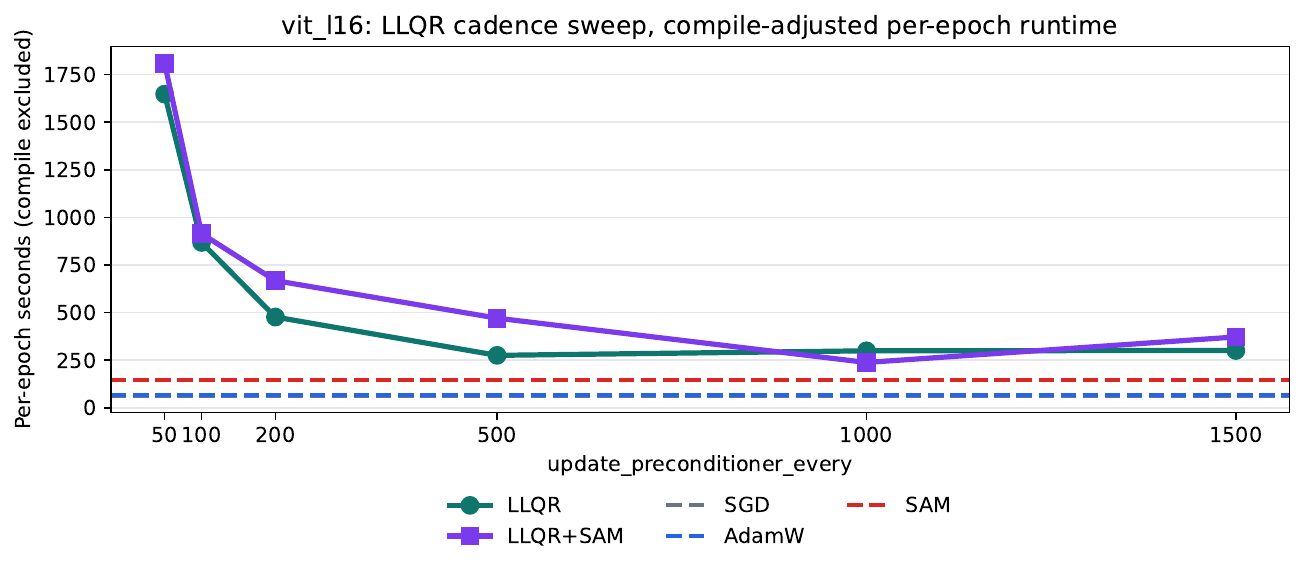}
  \caption{\LLQR{} cadence sweep for ViT-B/16 and ViT-L/16. Each panel shows
compile-adjusted seconds per epoch versus the \LLQR{} update interval, with
first-order baselines as horizontal references. Note that the ImageNet ResNet-50 runs in
Fig.~\ref{fig:imagenet_fsam_top1_loss_time} use \(1500\)-step updates.}
  \label{fig:appendix-vit-cadence}
\end{figure}


\section{Conclusion}

\begin{wrapfigure}{r}{0.50\linewidth}
  \centering
  \vspace{-2.5em}
  \includegraphics[width=1.0\linewidth]{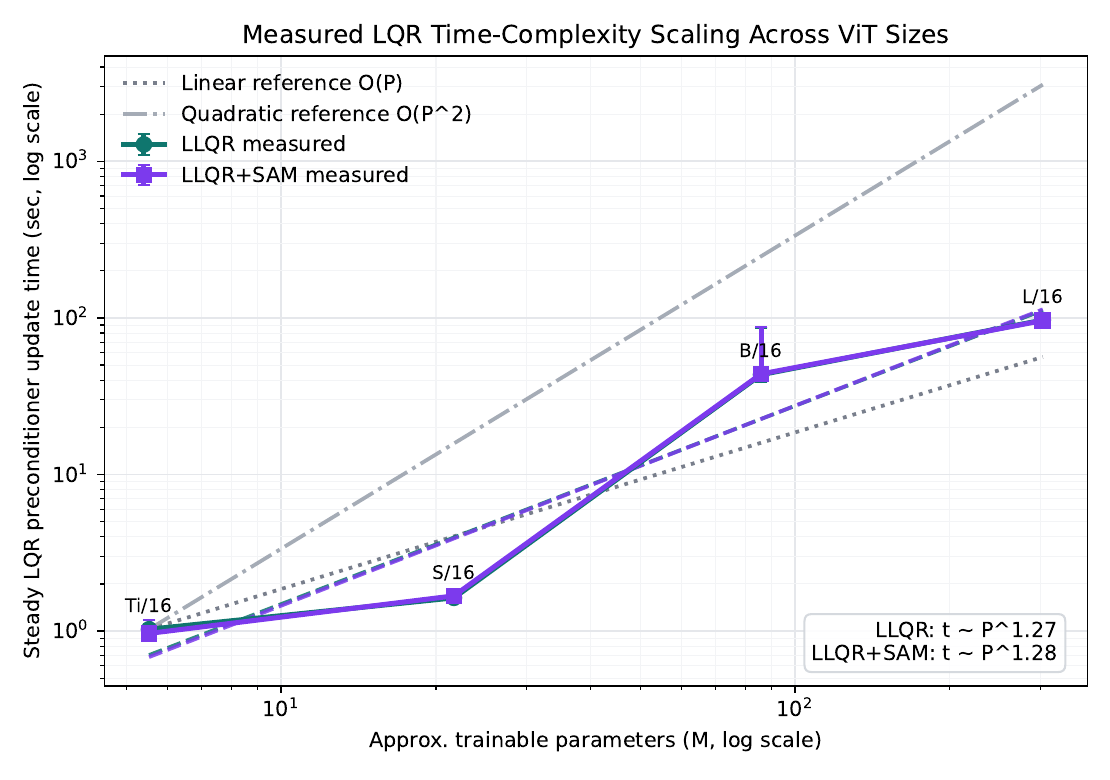}
  \vspace{-1.5em}
  \caption{Across ViT scales, preconditioner-update time grows nearly linearly for both
\LLQR and \LLQRSAM, far from
the quadratic scaling typically associated with second-order methods.  
}
  \label{fig:lqr-time-complexity}
  \vspace{-0.5em}
\end{wrapfigure}

We introduced LLQR+SAM, which pairs a slowly-updated LLQR preconditioner with a SAM perturbation evaluated and transported in the induced geometry. On a quadratic two-scale model the dynamics is closed-form: SAM prevents the iterate from localizing at any minimum, and the learned 
$U$ shapes the hovering scale to match the average geometry--large enough to escape sharp potholes, small enough to stay inside wide basins. The pothole-direction probe is amplified relative to vanilla SAM by a factor that grows as the surrounding basin becomes flatter--the regime where preconditioning matters most. The same $U$ that improves conditioning therefore also improves escape: 
$U$ and SAM are not independent ingredients but two complementary uses of one learned geometry.

Empirically, LLQR+SAM yields consistent gains over either component alone across CIFAR-10/100, TinyImageNet, ImageNet, and IWSLT14, including when paired with stronger SAM variants such as F-SAM, and the per-step overhead remains close to first-order training across ViT scales.

\paragraph{Limitations}
Our analysis relies on an idealized two-timescale quadratic model in which the slow \LLQR metric captures average
geometry and the \SAM perturbation probes localized sharpness.  This setting
is useful for exposing the mechanism, but it cannot certify that the same
decomposition transfers cleanly to the highly nonconvex and stochastic regime of
deep networks.

The empirical scope is also finite: we cover standard supervised vision
benchmarks and one sequence-modeling task, but not large language model pre-training or fine-tuning, nor every architecture family. 

Finally, \LLQRSAM inherits the extra gradient evaluation of \SAM and
adds periodic \LLQR metric updates;  while the resulting overhead is practical in our experiments, the best block structure, update cadence, and memory
budget remain scale-dependent design choices.

\clearpage
\bibliography{llqr_sam_bib}
\bibliographystyle{plainnat}

\clearpage
\appendix

\tableofcontents

\clearpage

\section{Layerwise LQR: Learning a Reusable Geometry}
\label{app:llqr-summary}

\begin{algorithm}[b]
\caption{\LLQR: relaxed periodic preconditioner}
\label{alg:llqr-summary}
\small
\begin{algorithmic}[1]
\REQUIRE model $f$, loss $L$, data loader $\mathcal D$, outer optimizer $\mathcal O_{\rm out}$, step size $\eta$
\REQUIRE recompute period $n$, inner steps $T$, structure $\mathcal S$, inner optimizer $\mathcal O_{\rm in}$, inner step size $\alpha$, EMA parameter $\beta$
\STATE Initialize $\theta^0$ and $U\leftarrow I$ projected to $\mathcal S$
\FOR{$k=0,1,2,\ldots$}
  \STATE Sample minibatch $(x^k,y^k)\sim\mathcal D$ and compute $g_k\leftarrow\nabla_\theta L(\theta^k)$
  \IF{$k \bmod n=0$}
    \STATE Linearize the network: $(A_i,B_i)\leftarrow(\nabla_x,\nabla_\theta)f_i(x_i^k,\theta_i^k)$
    \STATE Form the LQR blocks $(Q_i,R_i,M_i)$ for the chosen divergence-induced metric
    \STATE Set $U_0\leftarrow U$
    \FOR{$t=0$ to $T-1$}
      \STATE $U_{t+1}\leftarrow
      \mathcal O_{\rm in}\!\left(U_t,\nabla_{U_t}J,\alpha\right)$,
      where $J$ is the relaxed objective \eqref{eq:llqr-relaxed-summary}
    \ENDFOR
    \STATE $U\leftarrow \beta U+(1-\beta)U_T$
  \ENDIF
  \STATE Precondition the gradient: $\widetilde g_k\leftarrow Ug_k$
  \STATE Update parameters with the base optimizer:
  $\theta^{k+1}\leftarrow \mathcal O_{\rm out}(\theta^k,\widetilde g_k,\eta)$
\ENDFOR
\end{algorithmic}
\end{algorithm}

We briefly summarize the Layerwise LQR (\LLQR) methodology of
\citet{Anonymous2026LLQR}, which provides the geometry used throughout this work.
The motivation is to obtain the benefits of second-order or natural-gradient
descent without explicitly forming, storing, or inverting a global curvature
matrix.  At an iterate $\theta^k$, a broad class of geometry-aware descent
directions can be written as the solution of the local quadratic problem
\begin{equation}
\label{eq:llqr-steepest-summary}
\Delta\theta^\star
=
\arg\min_{\Delta\theta}
\left\{
\nabla L(\theta^k)^\top \Delta\theta
+
\frac12 \Delta\theta^\top H(\theta^k)\Delta\theta
\right\},
\end{equation}
where $H$ may be a regularized Hessian, Gauss--Newton matrix, Fisher matrix, or
more general divergence-induced metric.  Directly solving
$H\Delta\theta=-\nabla L(\theta^k)$ is infeasible for modern networks, and
standard scalable approximations typically impose block structure on $H$ before
solving the problem.  \LLQR instead first rewrites the dense quadratic problem
as a layerwise optimal-control problem, and only later imposes structure on the
learned inverse update.

Consider a depth-$N$ network
\[
x_{i+1}=f_i(x_i,\theta_i),\qquad i=0,\ldots,N-1.
\]
Linearizing the forward pass at $(x_i^k,\theta_i^k)$ gives the perturbation
dynamics
\begin{equation}
\label{eq:llqr-linearized-summary}
\delta x_{i+1}
=
A_i\delta x_i+B_i\delta\theta_i,
\qquad
\delta x_0=0,
\end{equation}
with $A_i=\partial f_i/\partial x_i$ and
$B_i=\partial f_i/\partial \theta_i$.  For divergence-induced quadratic
models, the global quadratic form in \eqref{eq:llqr-steepest-summary} can then
be decomposed into layerwise costs
\[
\delta x_N^\top Q_N\delta x_N
+
\sum_{i=0}^{N-1}
\begin{bmatrix}
\delta x_i\\
\delta\theta_i
\end{bmatrix}^{\!\top}
\begin{bmatrix}
Q_i & M_i^\top\\
M_i & R_i
\end{bmatrix}
\begin{bmatrix}
\delta x_i\\
\delta\theta_i
\end{bmatrix},
\]
subject to the linearized dynamics \eqref{eq:llqr-linearized-summary}.  The
resulting problem is a finite-horizon Linear Quadratic Regulator: the network
defines the dynamics, while the chosen descent geometry defines the quadratic
costs.  Its exact Riccati solution recovers the corresponding second-order
update, including Newton, Gauss--Newton, and natural-gradient variants.  In
large networks, however, the exact Riccati recursion remains too expensive, so
\LLQR uses it as a reference objective rather than as the deployed optimizer.

The scalable relaxation parameterizes the update by a structured inverse
preconditioner
\[
\delta\theta_i=-U_i\nabla_{\theta_i}L(\theta^k),
\qquad
U=\operatorname{diag}(U_0,\ldots,U_{N-1}),
\]
where each $U_i$ may be diagonal, Kronecker-factored, E-KFAC-like, or another
structured block.  Substituting this restricted update into the LQR objective
yields a direct objective over $U$:
\begin{equation}
\label{eq:llqr-relaxed-summary}
\begin{aligned}
\min_U\quad
& \nabla_{x_N}\ell(x_N)^\top \delta x_N
+\frac12 \delta x_N^\top Q_N\delta x_N
+\sum_{i=0}^{N-1}
\left[
\frac12 \delta x_i^\top Q_i\delta x_i
+\frac12 \delta\theta_i^\top R_i\delta\theta_i
+\delta\theta_i^\top M_i\delta x_i
\right] \\
\text{s.t.}\quad
& \delta x_{i+1}
=
A_i\delta x_i
-
B_iU_i\nabla_{\theta_i}L(\theta^k),
\qquad
\delta x_0=0 .
\end{aligned}
\end{equation}
Thus, structure is imposed on the reusable inverse action $U$, not on the
curvature model before the layerwise objective is derived.  This distinction is
important for \LLQR+\SAM: the learned geometry still comes from a layer-coupled
objective that encodes the dense quadratic model through the forward dynamics,
while the deployed update only requires applying $U$ to gradients.

In the \LLQR+\SAM setting, $U$ should be viewed as a slowly updated transport
geometry.  Between preconditioner refreshes, it is held fixed and applied
cheaply to the current gradient.  This makes the method compatible with
standard optimizers and with perturb-and-recompute schemes: \SAM determines a
local sharpness-aware gradient, while \LLQR supplies the anisotropic geometry
used to probe and/or transport that gradient.

\clearpage
\section{Detailed Analysis for the Two-Scale Pothole Model}

This appendix contains the supporting arguments used in
Sections~\ref{sec:method} and~\ref{sec:analysis}. The first subsection records
the fixed-metric Riemannian-\SAM\ transfer for \LLQR+\SAM. The remaining
subsections justify the two-scale pothole calculation: the coordinate
recursion, the scalar hovering envelope, the vanilla-\SAM\ comparison, and the
stochastic selection picture.

\subsection{Riemannian-\SAM\ transfer under the frozen \LLQR\ metric}
\label{app:proof_riemannian_transfer}

This subsection records the convergence-transfer statement used in
Section~\ref{sec:method}. The statement is conditional on the \LLQR\ metric
being frozen during each perturb-and-recompute step; it gives stationarity in
the instantaneous metric, not in a fixed limiting metric unless additional
metric convergence is assumed.

Let \(P_t=U_t^{-1}\). Assume that \(U_t\) is measurable with respect to the
optimization history \(\mathcal F_t\), is fixed during the \SAM\
perturb-and-recompute step, and satisfies the uniform spectral bounds
\[
    0<u_{\min}I\preceq U_t\preceq u_{\max}I<\infty .
\]
Assume also that the smoothness, retraction-Lipschitz, and stochastic-gradient
assumptions of Riemannian \SAM~\citep{yun2023riemanniansam} hold uniformly for
the compact family of metrics \(\{P_t\}\).

\begin{proposition}[Riemannian-\SAM\ transfer for \LLQR+\SAM]
\label{prop:llqrsam_riemannian_transfer}
With the same decaying stepsize and perturbation radius schedule as in
Riemannian \SAM, for \(\tau\sim\operatorname{Unif}\{0,\ldots,T-1\}\),
\[
\mathbb E
\left[
\|\operatorname{grad}_{P_\tau}L(\theta_\tau)\|_{P_\tau}^2
\right]
=
\mathbb E
\left[
\nabla L(\theta_\tau)^\top U_\tau\nabla L(\theta_\tau)
\right]
=
O(T^{-1/2}),
\]
up to the same stochastic-gradient and minibatch-variance terms as in the
Riemannian-\SAM\ theorem. Consequently,
\[
    \mathbb E\|\nabla L(\theta_\tau)\|_2^2=O(T^{-1/2}).
\]
\end{proposition}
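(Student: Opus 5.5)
The plan is to reduce the statement to the Riemannian-\SAM\ convergence theorem of \citet{yun2023riemanniansam}, applied one step at a time with the metric $P_t=U_t^{-1}$ frozen, and then to convert the resulting bound into Euclidean terms using the uniform spectral bounds on $U_t$.

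\textbf{Step 1: identify the geometric objects.} Take the manifold to be $\mathbb{R}^d$ with the constant (within a step) metric $\langle a,b\rangle_{P_t}=a^\top P_t b$, and the retraction $R_\theta(v)=\theta+v$. Under this metric the Riemannian gradient is $\operatorname{grad}_{P_t}L=P_t^{-1}\nabla L=U_t\nabla L$. Its squared norm is
\[
\|U_t\nabla L\|_{P_t}^2=\nabla L^\top U_tP_tU_t\nabla L=\nabla L^\top U_t\nabla L,
\]
which gives the first equality in the statement. In the same way, the normalized steepest-ascent perturbation $\rho\,\operatorname{grad}_{P_t}L/\|\operatorname{grad}_{P_t}L\|_{P_t}$ is exactly $\rho U_tg/\|g\|_{U_t}$. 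The outer step $\theta-\eta\,\operatorname{grad}_{P_t}L(\theta^+)$ is $\theta-\eta U_tg(\theta^+)$. So each iteration of \eqref{eq:llqrsam} is literally one Riemannian-\SAM\ step in the metric $P_t$.

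\textbf{Step 2: rerun the one-step descent inequality.} The Riemannian-\SAM\ proof rests on a per-step descent inequality of the form
\[
\mathbb{E}[L(\theta_{t+1})\mid\mathcal F_t]\le L(\theta_t)-c\,\eta_t\|\operatorname{grad}_{P_t}L(\theta_t)\|_{P_t}^2+C(\eta_t^2+\eta_t\rho_t+\text{variance}).
\]
The constants $c$ and $C$ depend only on the smoothness, retraction-Lipschitz and variance constants. I will re-derive this inequality for the time-indexed metric. Because $U_t$ is $\mathcal F_t$-measurable and held fixed during the step, conditioning on $\mathcal F_t$ reduces it to the fixed-metric case.

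Uniformity is the crux. By assumption the required constants hold uniformly over the family $\{P_t\}$. Alternatively, they can be transferred from Euclidean constants at the cost of factors of $u_{\max}/u_{\min}$, since $\|\cdot\|_{P_t}$ and $\|\cdot\|_2$ are equivalent with those bounds. This gives one $t$-independent pair $(c,C)$.

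\textbf{Step 3: telescope.} The key point is that the potential $L$ is metric-independent, so changing $P_t$ between steps introduces no extra term. Summing over $t=0,\dots,T-1$, using $L\ge L^\star$, and plugging in the Riemannian-\SAM\ schedules $\eta_t,\rho_t\propto t^{-1/2}$ (or constant $\propto T^{-1/2}$) gives
\[
\frac1T\sum_t\mathbb{E}\,\nabla L^\top U_t\nabla L=O(T^{-1/2}).
\]
This average is exactly the expectation over the uniform index $\tau$. The corollary then follows from $\nabla L^\top U_\tau\nabla L\ge u_{\min}\|\nabla L\|_2^2$, so that $\mathbb{E}\|\nabla L(\theta_\tau)\|_2^2\le u_{\min}^{-1}\,O(T^{-1/2})$.

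\textbf{Main obstacle.} I expect the difficulty to lie in Step 2. One must check that no step of the Riemannian-\SAM\ proof uses the metric being constant across iterations, for example a Lyapunov function built from the metric or a comparison of gradients at different iterates in one fixed norm. I would first scan for such places. Where they occur, I would bound cross-step terms through the Euclidean norm and the spectral bounds, so they are absorbed into constants. This works precisely because the only carried-over quantity is the scalar $L(\theta_t)$.
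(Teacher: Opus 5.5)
Your proposal is correct and follows essentially the same route as the paper: treat each iteration, conditioned on $\mathcal F_t$, as one fixed-metric Riemannian-SAM step with $\operatorname{grad}_{P_t}L=U_t\nabla L$, use the spectral bounds on $U_t$ to make the one-step descent constants uniform in $t$, sum over $t$, and pass to Euclidean stationarity via $U_t\succeq u_{\min}I$. Your explicit observation that the summed potential $L$ does not depend on the metric, so changing $P_t$ between steps adds no cross term, is the justification the paper leaves implicit when it simply says ``summing over $t$''.
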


\begin{proof}
Condition on \(\mathcal F_t\). Since \(U_t\) is \(\mathcal F_t\)-measurable
and fixed during the perturb-and-recompute step, \(P_t=U_t^{-1}\) is a fixed
Riemannian metric for the current update. For the constant metric
\[
    \langle \xi,\zeta\rangle_{P_t}=\xi^\top P_t\zeta,
\]
the Riemannian gradient is
\[
    \operatorname{grad}_{P_t}L(\theta)
    =
    P_t^{-1}\nabla L(\theta)
    =
    U_t\nabla L(\theta),
\]
and hence
\[
\|\operatorname{grad}_{P_t}L(\theta)\|_{P_t}^2
=
\nabla L(\theta)^\top U_t\nabla L(\theta).
\]
The \(\LLQR+\SAM\) perturbation
\[
    \epsilon_t
    =
    \rho_t\frac{U_tg_t}{(g_t^\top U_tg_t)^{1/2}}
\]
is therefore the normalized Riemannian steepest-ascent direction under the
frozen metric \(P_t\). The update
\[
    \theta_{t+1}
    =
    \theta_t-\alpha_tU_t\nabla L_{b_t}(\theta_t+\epsilon_t)
\]
is the corresponding Riemannian steepest-descent step on the probed loss.
Conditioned on \(\mathcal F_t\), this is one fixed-metric Riemannian-\SAM\ step.

The spectral bounds on \(U_t\) place all metrics \(P_t=U_t^{-1}\) in a compact
subset of the positive-definite cone. Hence the norm-equivalence, smoothness,
retraction-Lipschitz, and stochastic-gradient constants in the Riemannian-\SAM\
one-step descent inequality can be chosen uniformly over \(t\). Applying that
inequality conditionally on \(\mathcal F_t\), then taking expectation and
summing over \(t=0,\ldots,T-1\), gives
\[
\frac1T\sum_{t=0}^{T-1}
\mathbb E
\|\operatorname{grad}_{P_t}L(\theta_t)\|_{P_t}^2
=
O(T^{-1/2}),
\]
up to the same stochastic-gradient and minibatch-variance terms. Equivalently,
for \(\tau\sim\operatorname{Unif}\{0,\ldots,T-1\}\),
\[
\mathbb E
\|\operatorname{grad}_{P_\tau}L(\theta_\tau)\|_{P_\tau}^2
=
O(T^{-1/2}).
\]

Finally, since \(U_t\succeq u_{\min}I\),
\[
\nabla L(\theta_t)^\top U_t\nabla L(\theta_t)
\ge
u_{\min}\|\nabla L(\theta_t)\|_2^2,
\]
so stationarity in the instantaneous \(\LLQR\) metric implies Euclidean
first-order stationarity:
\[
\mathbb E\|\nabla L(\theta_\tau)\|_2^2
\le
u_{\min}^{-1}
\mathbb E
\left[
\nabla L(\theta_\tau)^\top U_\tau\nabla L(\theta_\tau)
\right]
=
O(T^{-1/2}).
\]

If \(U_t\to U_f\) and \(\|\nabla L(\theta_t)\|_2\le G\), then
\[
\left|
\nabla L(\theta_t)^\top U_t\nabla L(\theta_t)
-
\nabla L(\theta_t)^\top U_f\nabla L(\theta_t)
\right|
\le
G^2\|U_t-U_f\|_{\mathrm{op}}.
\]
Averaging gives
\[
\frac1T\sum_{t=0}^{T-1}
\mathbb E
\left[
\nabla L(\theta_t)^\top U_f\nabla L(\theta_t)
\right]
\le
O(T^{-1/2})
+
G^2\frac1T\sum_{t=0}^{T-1}
\mathbb E\|U_t-U_f\|_{\mathrm{op}}.
\]
Thus convergence in the limiting metric follows whenever this average
discrepancy vanishes, and the original rate is preserved when the discrepancy
is \(O(T^{-1/2})\).
\end{proof}

\subsection{Whitened form and non-commuting case}
\label{app:whitened_pothole_model}
The commuting hypothesis used in the main text is not essential. Define
\[
    y_t := \bar H^{1/2}e_t,
    \qquad
    A := \bar H^{-1/2}H\bar H^{-1/2}
    =
    I+\bar H^{-1/2}H_\epsilon\bar H^{-1/2}.
\]
Since \(A\succ0\) is symmetric, it admits an orthonormal eigendecomposition.
Its eigenvalues are the perceived sharpnesses \(\mu_i\). In the commuting
case, these reduce to
\[
    \mu_i
    =
    \frac{\lambda_i}{\bar\lambda_i}
    =
    1+\frac{\lambda_{\epsilon,i}}{\bar\lambda_i}.
\]
Using \(U=\bar H^{-1}\), the exact LLQR+SAM recursion becomes, in whitened
coordinates,
\begin{equation}
    y_{t+1}
    =
    (I-\eta A)y_t
    -
    \frac{\eta\rho}{\|Ay_t\|_2}A^2y_t.
    \label{app:eq:whitened-dyn}
\end{equation}
Thus the non-commuting case is identical after diagonalizing the normalized
curvature matrix \(A\). The shared-eigenbasis presentation in the main text is
only a notational simplification.

\subsection{Coordinate recursion}
\label{sec:coordinate_recursion}

For the quadratic loss \(L(\theta)=\tfrac12\theta^\top H\theta\), we have
\(g(\theta)=H\theta\). The LLQR+SAM perturbation gives
\[
    g(\theta^+)
    =
    H\theta
    +
    \rho\frac{HUH\theta}{\|H\theta\|_U},
\]
and therefore, with \(e_t=\theta_t\),
\begin{equation}
    e_{t+1}
    =
    (I-\eta UH)e_t
    -
    \frac{\eta\rho}{\|He_t\|_U}UHUH e_t .
    \label{app:eq:exact-dyn}
\end{equation}

In the commuting case, let \(v_i\) be the common eigenvectors of \(\bar H\),
\(H_\epsilon\), and \(H\), and write
\(e_t=\sum_i z_{i,t}v_i\). Since
\[
    \bar H v_i=\bar\lambda_i v_i,
    \qquad
    H v_i=\lambda_i v_i,
\]
we get
\[
    UH v_i
    =
    \bar H^{-1}Hv_i
    =
    \frac{\lambda_i}{\bar\lambda_i}v_i
    =
    \mu_i v_i,
    \qquad
    UHUH v_i=\mu_i^2v_i.
\]
Moreover,
\[
    \|He_t\|_U^2
    =
    (He_t)^\top U(He_t)
    =
    \sum_j \lambda_j^2\bar\lambda_j^{-1}z_{j,t}^2
    =
    \sum_j \mu_j^2\bar\lambda_j z_{j,t}^2.
\]
Substituting these identities into~\eqref{app:eq:exact-dyn} gives the full
coordinate recursion
\begin{equation}
    z_{i,t+1}
    =
    (1-\eta\mu_i)z_{i,t}
    -
    \frac{\eta\rho\mu_i^2}
    {\sqrt{\sum_j\mu_j^2\bar\lambda_j z_{j,t}^2}}
    z_{i,t}.
    \label{app:eq:coupled-dyn}
\end{equation}
If one eigendirection \(v_i\) dominates, the denominator reduces to
\(\mu_i\sqrt{\bar\lambda_i}|z_{i,t}|\), yielding the scalar recursion
\begin{equation}
    z_{i,t+1}
    =
    (1-\eta\mu_i)z_{i,t}
    -
    \frac{\eta\rho\mu_i}{\sqrt{\bar\lambda_i}}
    \operatorname{sign}(z_{i,t}).
    \label{app:eq:scalar-dyn}
\end{equation}
This is the one-dimensional model used in the main text.

\subsection{Scalar Hovering Envelope}
\label{sec:hovering_env}

Consider the scalar map
\begin{equation}
    z_{t+1}
    =
    a z_t-b\operatorname{sign}(z_t),
    \qquad
    a\in(0,1),
    \quad
    b>0.
    \label{app:eq:scalar-map}
\end{equation}
It has no nonzero fixed point. Indeed, if \(z>0\), then
\[
    z=az-b
    \quad\Longrightarrow\quad
    z=-\frac{b}{1-a}<0,
\]
which contradicts \(z>0\); the negative half-line gives the symmetric
contradiction. Hence the map cannot localize at a nonzero point and instead
oscillates around the origin.

Once the signs alternate, the limiting two-cycle has amplitude \(r_\star\)
satisfying
\[
    r_\star=b-ar_\star,
\]
so
\begin{equation}
    r_\star
    =
    \frac{b}{1+a}.
    \label{app:eq:two-cycle-generic}
\end{equation}
For~\eqref{app:eq:scalar-dyn},
\[
    a_i=1-\eta\mu_i,
    \qquad
    b_i=\frac{\eta\rho\mu_i}{\sqrt{\bar\lambda_i}},
\]
and therefore the exact asymptotic two-cycle amplitude is
\begin{equation}
    \limsup_{t\to\infty}|z_{i,t}|
    =
    \frac{b_i}{1+a_i}
    =
    \frac{\eta\rho\mu_i}
    {(2-\eta\mu_i)\sqrt{\bar\lambda_i}}.
    \label{app:eq:exact-two-cycle}
\end{equation}

The main text uses the simpler hovering envelope
\begin{equation}
    |z_{i,t}|
    \lesssim
    \frac{b_i}{1-a_i}
    =
    \frac{\rho}{\sqrt{\bar\lambda_i}}.
    \label{app:eq:hovering-envelope}
\end{equation}
This envelope is the scale relevant for the cancellation argument: the SAM kick
\(b_i\) and the contraction scale \(1-a_i=\eta\mu_i\) both grow linearly with
the perceived sharpness \(\mu_i\), leaving a radius controlled only by the
average curvature \(\bar\lambda_i\).

\subsection{Pothole Escape Criterion}
\label{app:pothole_escape_details}

The quadratic model is local, so it does not by itself describe the global
motion after leaving a well. It gives a localization criterion. If a pothole
has basin radius \(r_\epsilon\) in the relevant eigendirection, then the local
model predicts loss of localization when the hovering envelope exceeds that
radius:
\begin{equation}
    \frac{\rho}{\sqrt{\bar\lambda_\epsilon}}
    >
    r_\epsilon.
    \label{app:eq:escape-envelope}
\end{equation}
The key point is that this criterion depends on the average curvature
\(\bar\lambda_\epsilon\), not on the localized pothole sharpness
\(\lambda_\epsilon\). A sharper localized component increases the perceived
sharpness \(\mu_\epsilon\), but the SAM kick and contraction scale increase
together, producing the cancellation in~\eqref{app:eq:hovering-envelope}.

In a wide basin, small \(\bar\lambda_i\) is paired with a large basin radius,
so the same envelope can remain inside the basin. In a pothole, the basin
radius is small while the average direction remains flat, so the same envelope
can exceed the local basin radius.

\subsection{Vanilla SAM Comparison}
\label{app:vanilla_sam_comparison}

For vanilla SAM, \(U=I\). In an eigendirection of \(H\), the scalar recursion is
\begin{equation}
    z_{t+1}
    =
    (1-\eta\lambda_i)z_t
    -
    \eta\rho\lambda_i\operatorname{sign}(z_t).
    \label{app:eq:vanilla-sam-scalar}
\end{equation}
The same envelope calculation gives
\begin{equation}
    \frac{\eta\rho\lambda_i}{\eta\lambda_i}
    =
    \rho.
    \label{app:eq:vanilla-envelope}
\end{equation}
Thus vanilla SAM has a Euclidean hovering envelope of order \(\rho\) in every
direction, whereas LLQR+SAM has the direction-dependent envelope
\(\rho/\sqrt{\bar\lambda_i}\).

\begin{corollary}[Pothole-escape amplification]
\label{cor:amplification_app}
Under Assumption~\ref{assum:U} and the commuting hypothesis, the LLQR+SAM
hovering envelope around a pothole minimum is
\[
    \frac{\rho}{\sqrt{\bar\lambda_\epsilon}}
    =
    \frac{1}{\sqrt{\bar\lambda_\epsilon}}
    \times
    \bigl(\text{vanilla-SAM hovering envelope}\bigr).
\]
The amplification depends only on the average curvature
\(\bar\lambda_\epsilon\) in the pothole direction, not on the localized
pothole sharpness \(\lambda_\epsilon\).
\end{corollary}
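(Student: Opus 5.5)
The plan is to read the corollary as a comparison of two scalar hovering scales, each obtained from the envelope calculation of Appendix~\ref{sec:hovering_env} applied to the appropriate one-dimensional recursion. First, invoke Assumption~\ref{assum:U} ($U=\bar H^{-1}$) and the commuting hypothesis to reduce the exact dynamics~\eqref{app:eq:exact-dyn} to the coordinate recursion~\eqref{app:eq:coupled-dyn}. Then restrict to the pothole eigendirection $v_\epsilon$, where $\mu_\epsilon=1+\lambda_{\epsilon,\epsilon}/\bar\lambda_\epsilon$. When $e_t\propto v_\epsilon$ the normalization is exactly $\mu_\epsilon\sqrt{\bar\lambda_\epsilon}|z_t|$, so one obtains the scalar map~\eqref{app:eq:scalar-map} with
\[
a=1-\eta\mu_\epsilon,\qquad b=\eta\rho\mu_\epsilon\bar\lambda_\epsilon^{-1/2}.
\]
We need $a\in(0,1)$, i.e.\ $\eta\mu_\epsilon<1$. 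This stability condition should be stated explicitly.

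Second, apply the envelope scale $b/(1-a)$ from~\eqref{app:eq:hovering-envelope}. The factor $\eta\mu_\epsilon$ cancels between numerator and denominator, leaving $\rho\bar\lambda_\epsilon^{-1/2}$. This cancellation is the content of the claim that the scale is independent of the local sharpness. Third, repeat the computation with $U=I$ as in~\eqref{app:eq:vanilla-sam-scalar}: there $a=1-\eta\lambda$ and $b=\eta\rho\lambda$, so the envelope is $\rho$ in every direction. Dividing the two envelopes gives the factor $\bar\lambda_\epsilon^{-1/2}$. This factor depends only on $\bar\lambda_\epsilon$, because $\mu_\epsilon$ and $\lambda_\epsilon$ both dropped out.

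The main obstacle is making ``hovering envelope'' precise. The exact two-cycle amplitude~\eqref{app:eq:exact-two-cycle} is
\[
\frac{b}{1+a}=\frac{\eta\rho\mu_\epsilon}{(2-\eta\mu_\epsilon)\sqrt{\bar\lambda_\epsilon}},
\]
and this does depend on $\mu_\epsilon$. So the ratio statement is exact only for the $b/(1-a)$ envelope. I would therefore define the envelope as the invariant-interval bound: if $|z_t|\le b/(1-a)$, then
\[
|z_{t+1}|\le \max\bigl(a|z_t|,\;b-a|z_t|\bigr)\le b/(1-a),
\]
so the interval $[-b/(1-a),\,b/(1-a)]$ is forward invariant and is entered from outside at geometric rate. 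The vanilla-SAM envelope is defined the same way. With matching definitions the ratio is exactly $\bar\lambda_\epsilon^{-1/2}$.

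A remark would add that the exact two-cycle ratio equals
\[
\bar\lambda_\epsilon^{-1/2}\cdot\frac{\mu_\epsilon(2-\eta\lambda_\epsilon)}{\lambda_\epsilon(2-\eta\mu_\epsilon)},
\]
which reduces to the same factor up to $O(1)$ constants. The non-commuting case follows from the whitened form~\eqref{app:eq:whitened-dyn}, with $\bar\lambda_\epsilon$ interpreted through the back-transformation $\bar H^{-1/2}$.
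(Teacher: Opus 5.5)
Your core derivation is the paper's own argument and is correct: restrict the exact recursion to the invariant pothole eigendirection, read off $a=1-\eta\mu_\epsilon$ and $b=\eta\rho\mu_\epsilon\bar\lambda_\epsilon^{-1/2}$, take $b/(1-a)=\rho\bar\lambda_\epsilon^{-1/2}$, and divide by the vanilla value $\eta\rho\lambda_\epsilon/(\eta\lambda_\epsilon)=\rho$. Stating the stability condition $\eta\mu_\epsilon<1$ explicitly is a useful addition.

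Your closing remark is wrong, though, and the error affects how the corollary should be read. With $\lambda_\epsilon$ the total curvature in your vanilla recursion, $\mu_\epsilon=\lambda_\epsilon/\bar\lambda_\epsilon$. So the prefactor $\mu_\epsilon/\lambda_\epsilon$ equals $\bar\lambda_\epsilon^{-1}$, which is not an $O(1)$ constant. At equal $\eta$ (and given $\eta\lambda_\epsilon,\eta\mu_\epsilon<1$) the exact two-cycle ratio is
\[
\bar\lambda_\epsilon^{-3/2}\,\frac{2-\eta\lambda_\epsilon}{2-\eta\mu_\epsilon}\asymp\bar\lambda_\epsilon^{-3/2}.
\]
Likewise, forward invariance does not single out $b/(1-a)$: the interval $[-c,c]$ is invariant for every $c\ge b$. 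The smallest such interval, $[-b,b]$, again gives the ratio $\bar\lambda_\epsilon^{-3/2}$.

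The exponent $-1/2$ is specific to the $b/(1-a)$ scale, which divides out the step size. Equivalently, the two-cycle ratio equals $\bar\lambda_\epsilon^{-1/2}$ exactly only if you match contraction rates, $\eta_{\mathrm{van}}\lambda_\epsilon=\eta\mu_\epsilon$, rather than step sizes. So present $b/(1-a)$ as a definition of the envelope, as the paper does when it calls it the \emph{simpler hovering envelope}. Do not present it as something the invariance argument or the exact amplitude confirms. What holds across these choices is only that the ratio is governed by $\bar\lambda_\epsilon$ up to bounded factors.
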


This amplification cannot be reproduced selectively by retuning vanilla SAM.
Choosing
\[
    \rho^{\textsc{sam}}
    =
    \frac{\rho}{\sqrt{\bar\lambda_\epsilon}}
\]
would match the pothole-direction envelope, but it would enlarge the probe in
all directions, including non-pothole directions where this extra scale is not
needed. The contraction behavior also differs: vanilla SAM contracts at rate
\(1-\eta\lambda_i\), so its stable step size is constrained by the largest
curvature of \(H\). LLQR+SAM contracts at rate \(1-\eta\mu_i\), which removes
the conditioning of the average geometry \(\bar H\) from this constraint.

\subsection{Stochastic Selection Between Wells}
\label{app:selection}

The deterministic calculation above gives a local non-localization criterion:
when the hovering envelope is comparable to the radius of a sharp well, the
\SAM\ kick prevents long residence near its bottom. With stochastic gradients,
this becomes a selection effect. Noise may inject the iterate into a sharp
well, but the \SAM-induced hovering ejects it quickly; wide flat wells support
longer visits because their basin radius is large relative to the hovering
scale.

The following regenerative model formalizes this interpretation. It is not
intended as a full global model of training dynamics; it isolates the residence-time consequence of the local escape criterion.

\begin{proposition}[Regenerative selection between wells]
\label{prop:app_multiwell_selection}
Consider a regenerative idealization with wells indexed by $m$. At each cycle,
a well $J_n$ is sampled from a distribution $\nu$ with positive mass on each
well; the process starts near the well center, evolves under local
gradient-noise dynamics with noise scale $\sigma$, and exits when
$\|e_t\|\ge R$. Suppose flat wells satisfy
\[
    \mathbb E[\tau_R^{(m)}]
    \ge
    c_m^{\mathrm{flat}} R^2\sigma^{-2},
\]
whereas sharp unstable wells satisfy
\[
    \mathbb E[\tau_R^{(m)}]
    \le
    C_m^{\mathrm{sharp}}\bigl(1+\log(R/\sigma)\bigr).
\]
If at least one flat well is present, then the long-run occupation mass of
sharp wells satisfies
\[
    \sum_{m\in\mathcal M_{\mathrm{sharp}}}\mu_m(\sigma)
    =
    O\!\left(\frac{\sigma^2\log(R/\sigma)}{R^2}\right)
    \xrightarrow[\sigma\to0]{}0.
\]
\end{proposition}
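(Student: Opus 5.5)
The plan is to read the statement through the renewal--reward theorem for the regenerative process. Cycles are i.i.d.: at cycle $n$ a well $J_n\sim\nu$ is drawn, and the cycle length is the exit time $\tau_R^{(J_n)}$. We define the long-run occupation mass of well $m$ as the almost-sure limit of the fraction of time up to horizon $T$ spent in cycles with $J_n=m$. The reward is $\tau_R^{(J_n)}\mathbf 1\{J_n=m\}$ and the cycle length is $\tau_R^{(J_n)}$, so the renewal--reward theorem gives
\[
\mu_m(\sigma)=\frac{\nu_m\,\mathbb E[\tau_R^{(m)}]}{\sum_{m'}\nu_{m'}\,\mathbb E[\tau_R^{(m')}]},
\]
provided the denominator is finite. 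Finiteness will be assumed, or obtained by restricting to finitely many wells, each with finite mean exit time. This identity is the only probabilistic ingredient, and the rest is bounding.

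The first step is to bound the numerator summed over sharp wells. The hypothesis gives
\[
\sum_{m\in\mathcal M_{\mathrm{sharp}}}\nu_m\,\mathbb E[\tau_R^{(m)}]\le C^{\mathrm{sharp}}\bigl(1+\log(R/\sigma)\bigr),
\]
where $C^{\mathrm{sharp}}:=\sup_m C_m^{\mathrm{sharp}}$ (or the $\nu$-weighted sum, which is at most $\max_m C_m^{\mathrm{sharp}}$ when there are finitely many wells).

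The second step is to lower-bound the denominator. Keeping only one flat well $m_0$, which exists by hypothesis and has $\nu_{m_0}>0$, gives
\[
\sum_{m'}\nu_{m'}\,\mathbb E[\tau_R^{(m')}]\ge \nu_{m_0}c^{\mathrm{flat}}_{m_0}R^2\sigma^{-2}.
\]
Dividing the two bounds yields
\[
\sum_{\mathrm{sharp}}\mu_m(\sigma)\le \frac{C^{\mathrm{sharp}}}{\nu_{m_0}c^{\mathrm{flat}}_{m_0}}\cdot\frac{\sigma^2\bigl(1+\log(R/\sigma)\bigr)}{R^2}.
\]
For $\sigma\le R/e$ we have $1+\log(R/\sigma)\le 2\log(R/\sigma)$, which gives the stated $O\bigl(\sigma^2\log(R/\sigma)/R^2\bigr)$. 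Finally, $x^2\log(1/x)\to0$ as $x=\sigma/R\to0$, which gives the limit.

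The main obstacle is rigor at the modeling level, not the algebra. Two points need care. The first is that the occupation mass must be well defined, which requires finite mean cycle length. If some flat well had infinite mean exit time, the ratio would degenerate, but in that case the sharp mass would be $0$ and the bound would hold trivially. I would treat that case separately, arguing by truncation or monotone convergence. The second is that the constants $c_m^{\mathrm{flat}}$ and $C_m^{\mathrm{sharp}}$ must not depend on $\sigma$. This should be stated explicitly, as should uniformity over sharp wells if there are infinitely many. I would first try assuming finitely many wells, which makes every sup a max and keeps the argument clean.
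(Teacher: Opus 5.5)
Your proposal is correct and follows the paper's proof. Both apply the renewal--reward formula for $\mu_m(\sigma)$, bound the total sharp-well numerator by $O(1+\log(R/\sigma))$, lower-bound the denominator by a single flat well's $\Omega(R^2\sigma^{-2})$ contribution, and divide. Your additional care (finite mean cycle length, $\sigma$-independent constants, and $1+\log(R/\sigma)\le 2\log(R/\sigma)$ for $\sigma\le R/e$) makes explicit what the paper leaves implicit.
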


\begin{proof}
By the renewal-reward formula,
\[
    \mu_m(\sigma)
    =
    \frac{\nu_m\mathbb E[\tau_R^{(m)}]}
    {\sum_\ell \nu_\ell\mathbb E[\tau_R^{(\ell)}]}.
\]
The total sharp-well numerator is
$O(1+\log(R/\sigma))$, while the denominator is at least the contribution of
one flat well, namely $\Omega(R^2\sigma^{-2})$. Dividing gives the stated
bound.
\end{proof}

For a pothole direction, the relevant dimensionless escape ratio is
\[
    \frac{\rho}{r_\epsilon\sqrt{\bar\lambda_\epsilon}}.
\]
Corollary~\ref{cor:amplification_app} increases this ratio for \LLQR+\SAM\
relative to vanilla \SAM, predicting shorter visits to sharp wells and faster
selection of the wide flat basin.

The next calculation supports the path-length behavior observed in the noisy
toy experiment. It states that, in a stable scalar noisy mode, increasing the
effective metric denominator damps both stationary variance and one-step
motion.

\begin{proposition}[Metric damping under gradient noise]
\label{prop:app_metric_noise}
Consider the scalar noisy recursion
\[
    z_{t+1}^{B}
    =
    q_i^B z_t^B-\eta(d_i^B)^{-1}\varepsilon_{i,t},
    \qquad
    q_i^B:=1-\eta\lambda_i(d_i^B)^{-1},
    \qquad
    \operatorname{Var}(\varepsilon_{i,t})=\tau_i^2,
\]
where $B$ denotes a metric choice and $d_i^B>\eta\lambda_i/2$. Then
\[
    \operatorname{Var}_{\pi_B}(z_i)
    =
    \frac{\eta\tau_i^2}{\lambda_i(2d_i^B-\eta\lambda_i)},
    \qquad
    \mathbb E_{\pi_B}\!\left[(z_{i,t+1}^B-z_{i,t}^B)^2\right]
    =
    \frac{2\eta^2\tau_i^2}{d_i^B(2d_i^B-\eta\lambda_i)}.
\]
Consequently, if $d_i^P\ge d_i^Q>\eta\lambda_i/2$, then metric $P$ has no
larger stationary variance or one-step motion than metric $Q$, with strict
improvement when $\tau_i^2>0$ and $d_i^P>d_i^Q$.
\end{proposition}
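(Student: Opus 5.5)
We treat the scalar recursion as a stationary AR(1) process and compute both quantities from the stationary second moment. We assume, as is implicit in the statement, that the noise $\varepsilon_{i,t}$ is i.i.d.\ across $t$, has mean zero, and is independent of $z_t^B$.

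\textbf{Step 1: stability.} The hypothesis $d_i^B>\eta\lambda_i/2$, together with $\lambda_i>0$ and $d_i^B>0$, gives
\[
-1<q_i^B=1-\eta\lambda_i(d_i^B)^{-1}<1 .
\]
Hence the linear recursion driven by i.i.d.\ noise has a unique stationary law $\pi_B$ with finite second moment.

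\textbf{Step 2: stationary variance.} Write $x:=\eta\lambda_i/d_i^B$. Squaring the recursion and taking expectations, the cross term vanishes by independence, so stationarity gives
\[
V=(q_i^B)^2V+\eta^2(d_i^B)^{-2}\tau_i^2 .
\]
Since $1-(q_i^B)^2=x(2-x)$, we get
\[
V=\frac{\eta^2\tau_i^2}{(d_i^B)^2\,x(2-x)}
=\frac{\eta\tau_i^2}{\lambda_i(2d_i^B-\eta\lambda_i)} ,
\]
which is the first formula.

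\textbf{Step 3: one-step motion.} The increment is $z_{t+1}-z_t=(q_i^B-1)z_t-\eta(d_i^B)^{-1}\varepsilon_{i,t}$. Its second moment is
\[
(1-q_i^B)^2V+\eta^2\tau_i^2(d_i^B)^{-2}
=\frac{\eta^2\tau_i^2}{(d_i^B)^2}\Bigl(\frac{x}{2-x}+1\Bigr)
=\frac{2\eta^2\tau_i^2}{(d_i^B)^2(2-x)} .
\]
Substituting $(d_i^B)^2(2-x)=d_i^B(2d_i^B-\eta\lambda_i)$ gives the second formula.

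\textbf{Step 4: comparison.} In $d$, the variance formula is $\tau_i^2$ times a decreasing function of $d$ on $(\eta\lambda_i/2,\infty)$, namely $\eta/(\lambda_i(2d-\eta\lambda_i))$. The motion formula is $\tau_i^2$ times the reciprocal of $d(2d-\eta\lambda_i)$, which is positive and strictly increasing there. So $d_i^P\ge d_i^Q$ gives no larger value under $P$ for both quantities. When $\tau_i^2>0$ and $d_i^P>d_i^Q$, strict monotonicity gives strict inequality.

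No step is a genuine obstacle. The only delicate point is Step 1, namely justifying that the stationary law exists. That needs the noise to be i.i.d.\ and independent of the current iterate, plus the strict inequality $|q_i^B|<1$.
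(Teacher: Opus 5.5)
Your proof is correct and follows essentially the same route as the paper. Both arguments use the AR(1) stationary variance equation with $1-(q_i^B)^2=\eta\lambda_i(2d_i^B-\eta\lambda_i)/(d_i^B)^2$, then the increment's second moment with the cross term removed by independence, then monotonicity of the two denominators in $d_i^B$. Your explicit stability check $|q_i^B|<1$ and your i.i.d., zero-mean noise assumption make precise what the paper leaves implicit.
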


\begin{proof}
The AR(1) variance equation gives
\[
    V_i^B
    =
    (q_i^B)^2V_i^B+\frac{\eta^2\tau_i^2}{(d_i^B)^2}.
\]
Since
\[
    1-(q_i^B)^2
    =
    \frac{\eta\lambda_i(2d_i^B-\eta\lambda_i)}{(d_i^B)^2},
\]
we obtain
\[
    V_i^B
    =
    \frac{\eta\tau_i^2}{\lambda_i(2d_i^B-\eta\lambda_i)}.
\]
Moreover,
\[
    z_{i,t+1}^{B}-z_{i,t}^{B}
    =
    -\eta\lambda_i(d_i^B)^{-1}z_{i,t}^{B}
    -\eta(d_i^B)^{-1}\varepsilon_{i,t}.
\]
At stationarity, $z_{i,t}^{B}$ is independent of the fresh noise, so the cross
term vanishes and
\[
    \mathbb E_{\pi_B}\!\left[(z_{i,t+1}^{B}-z_{i,t}^{B})^2\right]
    =
    \frac{\eta^2}{(d_i^B)^2}\left(\lambda_i^2V_i^B+\tau_i^2\right)
    =
    \frac{2\eta^2\tau_i^2}{d_i^B(2d_i^B-\eta\lambda_i)}.
\]
Both denominators are increasing for $d_i^B>\eta\lambda_i/2$, which proves the
comparison.
\end{proof}

\clearpage
\section{Experiments Details}
\label{app:experiments-details}

\subsection{Standard Supervised Training Benchmarks}
\label{app:standard-supervised-training-details}

\paragraph{CIFAR protocol.}
For CIFAR-10 and CIFAR-100, all methods use the same supervised classification
protocol: a 200-epoch cosine schedule with initial learning rate \(0.05\),
Polyak momentum \(0.9\), batch size \(128\), random crop and horizontal flip
augmentation, normalization, and Cutout regularization~\citep{devries2017improved}.
Weight decay is selected in \(\{10^{-4},5\times10^{-4},10^{-3}\}\) based on
baseline best performance.  \SAM uses the recommended perturbation radius
\(\rho=0.1\) on CIFAR-10 and \(\rho=0.2\) on
CIFAR-100~\citep{foret2021sam,li2023enhancing,mi2022make,li2024friendly};
FisherSAM uses its recommended \(\rho=0.1\) and inverse-Fisher regularization
\(\eta=0.1\)~\citep{kim2022fishersam}.  The \LLQR rows keep the outer SGDM
recipe fixed and use the recommended inner preconditioner-learning settings of
\citet{Anonymous2026LLQR}: inner batch size \(128\), preconditioner learning rate
\(10^{-3}\), \(50\) inner momentum-solver steps, and update period \(500\)
optimizer steps.  The reported NGD rows use the NGD-induced divergence with EMA
\(0.95\), while the Newton rows use the Newton-induced divergence with EMA
\(0.9\).  \LLQRSAM reuses the same hyperparameters as its \LLQR and \SAM
components, without additional method-specific tuning.  Each CIFAR entry is the
best test accuracy observed during training, aggregated over five independent
seeds. All CIFAR experiments were run on NVIDIA L40S GPUs.

\paragraph{TinyImageNet protocol.}
TinyImageNet-200 uses the same training and reporting protocol and optimizer
hyperparameters as the CIFAR benchmarks, with the comparison restricted in the
main text to ResNet-18 and WRN-28-10.  Entries in
Table~\ref{tab:tinyimagenet-resnet-benchmarks} are best top-1 test accuracy
aggregated over three seeds. All TinyImageNet experiments were run on NVIDIA L40S GPUs.

\paragraph{\SAM-variant synergy protocol.}
The CIFAR \SAM-variant comparison pairs \LLQR with
\textsc{F-SAM}~\citep{li2024friendly}.  \textsc{F-SAM} uses the recommended
exponential moving average of gradient accumulation:
\(\lambda=0.9\) for WRN-28-10 and PyramidNet-110, and \(\lambda=0.6\) for the
other architectures.  The comparison otherwise follows the CIFAR protocol above
and reports best top-1 test accuracy over five seeds.
All experiments were run on NVIDIA L40S GPUs.

\paragraph{ImageNet protocol.}
The ImageNet comparison uses ResNet-50 on ImageNet~\citep{imagenetDeng2009}.
Relative to the preceding vision experiments, the only hyperparameter changes
are \(\rho=0.075\) and \(\lambda=0.95\), following the recommendation of
\citet{li2024friendly}.  The \LLQR rows use the E-KFAC block structure.
All ImageNet experiments were run on NVIDIA A100 GPUs.

\paragraph{IWSLT14 German-to-English protocol.}
The sequence-modeling benchmark uses the fairseq IWSLT14 German-to-English
Transformer training recipe of \citet{DBLP:conf/wmt/OttEGA18}.  The \SAM
perturbation radius is tuned over \([0.0005,0.2]\), with \(\rho=0.005\) giving
the best validation result. \SAM pertubation only starts being applied after learning rate warmup phase for stability. \LLQR hyperparameters are kept unchanged from the
vision experiments except for the preconditioner EMA decay, which is set to
\(0.925\).  Table~\ref{tab:iwslt14-base-sam-llqr-ngd} reports best
validation BLEU aggregated over five seeds. All IWSLT14 experiments were run on NVIDIA L40S GPUs.

\clearpage

\section{Additional Experimental Results}
\label{app:add_results}

\subsection{Noise injection robustness}
\providecommand{\NoiseInjectionPlotDir}{noise_injection_report}
\label{sec:noise-injection}

Since \SAM is known to improve robustness under label noise, we ask whether
learned geometry strengthens this effect.  On CIFAR-100, we corrupt only the
training labels with symmetric random flips at rate \(\gamma\), keep the test
set clean, and reuse the standard supervised training recipe. Table~\ref{tab:noise-injection-numbers}
reports the best clean-test accuracy across
\(\gamma \in \{0.2,0.6,0.7,0.8\}\). \LLQRSAM improves over \SAM at every
corruption level, with especially clear gains at moderate-to-high noise,
showing that transporting the sharpness-aware update through the learned
\LLQR geometry strengthens robustness under corrupted supervision.  The gains
remain positive even at \(\gamma=0.8\), where supervision is nearly
degenerate, the learned-geometry sharpness correction remains beneficial.

\begin{table}[t]
  \centering
  \caption{%
    Robustness to symmetric training-label corruption on CIFAR-100. Entries are
  best test accuracy on clean test labels (mean $\pm$ std, 3 seeds).
  The corruption rate \(\gamma\) denotes the fraction of randomly flipped
  training labels. \textcolor{LLQRSAMNGD}{Blue} denotes NGD-induced \LLQR
  geometry.
  }
  \label{tab:noise-injection-numbers}
  \small
  \resizebox{\textwidth}{!}{%
  \begin{tabular}{@{}llccccc@{}}
    \toprule
    & & \multicolumn{3}{c}{Base geometry}
      & \multicolumn{1}{c}{Sharpness baseline}
      & \multicolumn{1}{c}{Combined geometry and sharpness} \\
    \cmidrule(lr){3-5}\cmidrule(lr){6-6}\cmidrule(l){7-7}
    Dataset & $\gamma$
      & \textsc{SGDM}
      & \textsc{Adam}
      & \textcolor{LLQRSAMNGD}{\LLQR}
      & \textsc{SAM}
      & \textcolor{LLQRSAMNGD}{\LLQRSAM} \\
    \midrule
    CIFAR-100 & $0.2$
      & $63.41 \pm 0.46$
      & $41.01 \pm 0.63$
      & \textcolor{LLQRSAMNGD}{$64.32 \pm 0.22$}
      & $67.15 \pm 0.43$
      & \textcolor{LLQRSAMNGD}{$\textbf{68.64} \pm 0.36$} \\
    CIFAR-100 & $0.6$
      & $41.21 \pm 1.27$
      & $9.76 \pm 1.46$
      & \textcolor{LLQRSAMNGD}{$41.21 \pm 1.13$}
      & $46.81 \pm 1.46$
      & \textcolor{LLQRSAMNGD}{$\textbf{48.54} \pm 0.81$} \\
    CIFAR-100 & $0.7$
      & $32.52 \pm 0.72$
      & $4.16 \pm 0.52$
      & \textcolor{LLQRSAMNGD}{$34.51 \pm 0.71$}
      & $38.96 \pm 0.72$
      & \textcolor{LLQRSAMNGD}{$\textbf{43.27} \pm 0.82$} \\
    CIFAR-100 & $0.8$
      & $22.97 \pm 0.86$
      & $2.23 \pm 0.12$
      & \textcolor{LLQRSAMNGD}{$24.54 \pm 0.46$}
      & $31.19 \pm 1.74$
      & \textcolor{LLQRSAMNGD}{$\textbf{33.06} \pm 1.22$} \\
    \bottomrule
  \end{tabular}%
  }
\end{table}

\subsection{Scalability}
\label{appsec:scalability}

Our \LLQR{} implementation attaches a blockwise preconditioner to network
layers.  The storage footprint is therefore determined by the chosen
parameterization of each block rather than by \LLQR{} as a framework.  Richer
blocks can encode more geometry but require more optimizer state, while simpler
parameterizations reduce memory at the cost of expressivity.  Table~\ref{tab:model-scale-storage}
summarizes the resulting fp32 storage across ViT scales for the preconditioner
families considered in this scalability study.

For comparison, we also include a hypothetical \texttt{diag-kfac} structure,
which approximates each Kronecker factor by its diagonal:
\(\mathrm{Diag}(a)\,X\,\mathrm{Diag}(b)\) for a reshaped kernel
\(X\in\mathbb{R}^{m\times n}\).  This reduces the per-kernel preconditioner
storage from dense Kronecker-style scaling to \(O(m+n)\) scalars, providing a
lightweight option for trading geometric expressivity for memory.

\begin{table*}[t]
  \centering
  \footnotesize
  \setlength{\tabcolsep}{3pt}
  \caption{Cross-model analytical fp32 storage for e-KFAC and diag-KFAC preconditioners
        under ImageNet-scale ViT geometry.  Model counts include heads and
        classification layers; \#precond reports the number of stored preconditioner
        scalars for each parameterization.}
  \label{tab:model-scale-storage}
  \resizebox{\textwidth}{!}{
  \begin{tabular}{llcrrrr}
    \toprule
    Model & Storage & $S$ & \multicolumn{1}{c}{\#model} & \multicolumn{1}{c}{\#precond} & \multicolumn{1}{c}{precond} & \multicolumn{1}{c}{total} \\
     &  &  & \multicolumn{1}{c}{params} & \multicolumn{1}{c}{params} & \multicolumn{1}{c}{(MiB/dev.)} & \multicolumn{1}{c}{(MiB/dev.)} \\
    \midrule
    ViT-Ti/16 & e-kfac & 1 & 5{,}427{,}080 & 24{,}235{,}082 & 92.45 & 113.15 \\
    ViT-Ti/16 & e-kfac & 4 & 5{,}427{,}080 & 24{,}235{,}082 & 23.11 & 43.82 \\
    ViT-Ti/16 & diag-kfac & 1 & 5{,}427{,}080 & 73{,}426 & 0.28 & 20.98 \\
    ViT-Ti/16 & diag-kfac & 4 & 5{,}427{,}080 & 73{,}426 & 0.07 & 20.77 \\
    \midrule
    ViT-B/16 & e-kfac & 1 & 86{,}567{,}656 & 387{,}826{,}882 & 1{,}479.44 & 1{,}809.67 \\
    ViT-B/16 & e-kfac & 4 & 86{,}567{,}656 & 387{,}826{,}882 & 369.86 & 700.09 \\
    ViT-B/16 & diag-kfac & 1 & 86{,}567{,}656 & 294{,}038 & 1.12 & 331.35 \\
    ViT-B/16 & diag-kfac & 4 & 86{,}567{,}656 & 294{,}038 & 0.28 & 330.51 \\
    \midrule
    ViT-L/16 & e-kfac & 1 & 304{,}326{,}632 & 1{,}367{,}114{,}178 & 5{,}215.13 & 6{,}376.04 \\
    ViT-L/16 & e-kfac & 4 & 304{,}326{,}632 & 1{,}367{,}114{,}178 & 1{,}303.78 & 2{,}464.70 \\
    ViT-L/16 & diag-kfac & 1 & 304{,}326{,}632 & 771{,}990 & 2.94 & 1{,}163.86 \\
    ViT-L/16 & diag-kfac & 4 & 304{,}326{,}632 & 771{,}990 & 0.74 & 1{,}161.65 \\
    \bottomrule
  \end{tabular}
   }
\end{table*}

We also provide cadence results in Table~\ref{tab:vit-bl16-cadence-details} and
Fig.~\ref{fig:appendix-vit-cadence}, to be read as wall-clock overhead
measurements, not as accuracy or convergence comparisons.  As expected, the
cost decreases when \LLQR updates are performed less frequently, since fewer
preconditioner refreshes are executed per epoch. This steady-update scaling trend is further exposed in
Fig.~\ref{fig:lqr-time-complexity}.

\begin{table}[t]
  \centering
  \small
  \caption{Measured \LLQR cadence details for ViT-B/16 and ViT-L/16. Times are
compile-adjusted seconds per epoch, excluding the first preconditioner-update
overhead. Baseline rows report first-order optimizer runtimes; \LLQR rows
additionally report ratios relative to \textsc{SGD} and \SAM on the same
architecture.}
  \label{tab:vit-bl16-cadence-details}
  \resizebox{\textwidth}{!}{
  \begin{tabular}{lllrrrrr}
    \toprule
    Arch & Method & Update interval & Updates & Epoch time & vs. SGD & vs. SAM & Steady update \\
    \midrule
    ViT-B/16 & SGD & -- & -- & 33.20 & 1.00$\times$ & 0.35$\times$ & -- \\
    ViT-B/16 & AdamW & -- & -- & 38.51 & 1.16$\times$ & 0.41$\times$ & -- \\
    ViT-B/16 & SAM & -- & -- & 93.84 & 2.83$\times$ & 1.00$\times$ & -- \\
    \midrule
    ViT-B/16 & LLQR & 50 & 32 & 1007.31 & 30.34$\times$ & 10.73$\times$ & 59.38 \\
    ViT-B/16 & LLQR & 100 & 16 & 408.09 & 12.29$\times$ & 4.35$\times$ & 44.25 \\
    ViT-B/16 & LLQR & 200 & 8 & 204.11 & 6.15$\times$ & 2.18$\times$ & 39.38 \\
    ViT-B/16 & LLQR & 500 & 4 & 118.97 & 3.58$\times$ & 1.27$\times$ & 38.85 \\
    ViT-B/16 & LLQR & 1000 & 2 & 143.95 & 4.34$\times$ & 1.53$\times$ & 86.55 \\
    ViT-B/16 & LLQR & 1500 & 2 & 91.24 & 2.75$\times$ & 0.97$\times$ & 42.52 \\
    ViT-B/16 & LLQR+SAM & 50 & 32 & 813.98 & 24.51$\times$ & 8.67$\times$ & 46.05 \\
    ViT-B/16 & LLQR+SAM & 100 & 16 & 390.59 & 11.76$\times$ & 4.16$\times$ & 39.91 \\
    ViT-B/16 & LLQR+SAM & 200 & 8 & 426.81 & 12.85$\times$ & 4.55$\times$ & 85.33 \\
    ViT-B/16 & LLQR+SAM & 500 & 4 & 155.50 & 4.68$\times$ & 1.66$\times$ & 40.58 \\
    ViT-B/16 & LLQR+SAM & 1000 & 2 & 173.25 & 5.22$\times$ & 1.85$\times$ & 87.41 \\
    ViT-B/16 & LLQR+SAM & 1500 & 2 & 112.65 & 3.39$\times$ & 1.20$\times$ & 41.63 \\
    \midrule
    ViT-L/16 & SGD & -- & -- & 62.59 & 1.00$\times$ & 0.43$\times$ & -- \\
    ViT-L/16 & AdamW & -- & -- & 67.84 & 1.08$\times$ & 0.46$\times$ & -- \\
    ViT-L/16 & SAM & -- & -- & 146.54 & 2.34$\times$ & 1.00$\times$ & -- \\
    \midrule
    ViT-L/16 & LLQR & 50 & 32 & 1647.13 & 26.32$\times$ & 11.24$\times$ & 97.46 \\
    ViT-L/16 & LLQR & 100 & 16 & 868.47 & 13.88$\times$ & 5.93$\times$ & 97.34 \\
    ViT-L/16 & LLQR & 200 & 8 & 476.77 & 7.62$\times$ & 3.25$\times$ & 97.52 \\
    ViT-L/16 & LLQR & 500 & 4 & 275.63 & 4.40$\times$ & 1.88$\times$ & 95.18 \\
    ViT-L/16 & LLQR & 1000 & 2 & 299.48 & 4.79$\times$ & 2.04$\times$ & 96.46 \\
    ViT-L/16 & LLQR & 1500 & 2 & 301.14 & 4.81$\times$ & 2.06$\times$ & 95.91 \\
    ViT-L/16 & LLQR+SAM & 50 & 32 & 1809.84 & 28.92$\times$ & 12.35$\times$ & 95.78 \\
    ViT-L/16 & LLQR+SAM & 100 & 16 & 914.70 & 14.62$\times$ & 6.24$\times$ & 96.36 \\
    ViT-L/16 & LLQR+SAM & 200 & 8 & 667.77 & 10.67$\times$ & 4.56$\times$ & 96.20 \\
    ViT-L/16 & LLQR+SAM & 500 & 4 & 470.89 & 7.52$\times$ & 3.21$\times$ & 96.57 \\
    ViT-L/16 & LLQR+SAM & 1000 & 2 & 238.29 & 3.81$\times$ & 1.63$\times$ & 96.11 \\
    ViT-L/16 & LLQR+SAM & 1500 & 2 & 372.23 & 5.95$\times$ & 2.54$\times$ & 95.50 \\
    \bottomrule
  \end{tabular}
  }
\end{table}

\clearpage

\end{document}